\begin{document}

\title{Manifold Elastic Net: A Unified Framework for\\Sparse Dimension Reduction}
\author{Tianyi Zhou\inst{1} \and Dacheng Tao\inst{1} \and Xindong Wu\inst{2}}
\institute{School of Computer Engineering, Nanyang Technological University, Singapore 639798. \email{dctao@ntu.edu.sg} \and Department of Computer Science, University of Vermont, 33 Colchester Avenue, Burlington, Vermont 05405, USA.}
\maketitle

\begin{abstract}
It is difficult to find the optimal sparse solution of a manifold learning based dimensionality reduction algorithm. The lasso or the elastic net penalized manifold learning based dimensionality reduction is not directly a lasso penalized least square problem and thus the least angle regression (LARS) (Efron et al. \cite{LARS}), one of the most popular algorithms in sparse learning, cannot be applied. Therefore, most current approaches take indirect ways or have strict settings, which can be inconvenient for applications. In this paper, we proposed the manifold elastic net or MEN for short. MEN incorporates the merits of both the manifold learning based dimensionality reduction and the sparse learning based dimensionality reduction. By using a series of equivalent transformations, we show MEN is equivalent to the lasso penalized least square problem and thus LARS is adopted to obtain the optimal sparse solution of MEN. In particular, MEN has the following advantages for subsequent classification: 1) the local geometry of samples is well preserved for low dimensional data representation, 2) both the margin maximization and the classification error minimization are considered for sparse projection calculation, 3) the projection matrix of MEN improves the parsimony in computation, 4) the elastic net penalty reduces the over-fitting problem, and 5) the projection matrix of MEN can be interpreted psychologically and physiologically. Experimental evidence on face recognition over various popular datasets suggests that MEN is superior to top level dimensionality reduction algorithms.
\end{abstract}

\section{Introduction}

One of the primary focuses in data mining and machine learning is finding a succinct and effective representation for original high dimensional samples (Hastie et al. \cite{HastieStatistics}; Kriegel et al. \cite{DataMining}; Ding and Li \cite{Ding1}; Ding et al. \cite{NMF}; Li et al. \cite{TextCate}; Tao et al. \cite{SupervisedTensorLearning}; Tao et al. \cite{TDA}). Linear dimensionality deduction is such a tool that projects the original samples from a high dimensional space to a low dimensional subspace. Meanwhile some particular information, e.g., manifold structure and discriminative information, of the original high dimensional samples will be well preserved while noises will be removed in the selected subspace.

\subsection{The state of the art}

In the past decades, a dozen of algorithms have been developed and extensive experimental results have demonstrated that duly selected subspace is effective and efficient for subsequent utilizations. In this paper, we categorize popular dimensionality reduction algorithms into the following three groups:

\begin{enumerate}

\item Conventional linear dimensionality reduction algorithms, e.g., principal components analysis (PCA) (Hotelling \cite{PCA}), Fisher's linear discriminant analysis (FLDA) (Fisher \cite{FisherLDA}), regularized FLDA, and the geometric mean based subspace selection (Tao et al. \cite{GeometricMean}). All of these algorithms assume samples are drawn from different Gaussians. PCA maximizes the mutual information between original high-dimensional Gaussian distributed samples and projected low-dimensional samples. PCA, which is unsupervised, does not utilize the class label information. While, LDA finds a projection matrix that maximizes the trace of the between-class scatter matrix and minimizes the trace of the within-class scatter matrix in the projected subspace simultaneously. The same as PCA, FLDA and regularized FLDA assume samples are drawn from homoscedastic Gaussians. Therefore, FLDA and regularized FLDA cannot work well when Gaussians are heteroscedastic. Additionally, they always merge classes which are close in the high dimensional space. Although the geometric mean based subspace selection and its harmonic mean based extension (Bian and Tao \cite{HarmonicMean}) assume samples are drawn from heteroscedastic Gaussians and do not tend to merge close classes, they basically work for Gaussian distributed samples.

\item Manifold learning based dimensionality reduction algorithms: e.g., locally linear embedding (LLE) (Roweis and Saul \cite{LLE}), ISOMAP (Tenenbaum et al. \cite{ISOMAP}), Laplacian eigenmaps (LE) (Belkin and Niyogi \cite{LaplacianEigenmap}; Li et al. \cite{DLLETensor}), Hessian eigenmaps (HLLE) (Donoho and Grimes \cite{HessianEigenmap}), Generative Topographic Mapping (GTM) (Bishop et al. \cite{GTM}; Fyfe \cite{TopographicMaps}) and local tangent space alignment (LTSA) (Zhang and Zha \cite{LSTA}). LLE uses linear coefficients, which reconstruct a given measurement by its neighbours, to represent the local geometry, and then seeks a low-dimensional embedding, in which these coefficients are still suitable for reconstruction. ISOMAP preserves global geodesic distances of all pairs of measurements. LE preserves proximity relationships by manipulations on an undirected weighted graph, which indicates neighbour relations of pairwise measurements. LTSA exploits the local tangent information as a representation of the local geometry and this local tangent information is then aligned to provide a global coordinate. HLLE obtains the final low-dimensional representations by applying eigen-analysis to a matrix which is built by estimating the Hessian over neighbourhood. All these algorithms have the out of sample problem and thus a dozen of linearizations have been proposed, e.g., locality preserving projections (LPP) (He and Niyogi \cite{LPP}), neighborhood preserving embedding (NPE) (He et al. \cite{NPE}), and orthogonal neighbourhood preserving projections (ONPP). Recently, we provide a systematic framework, i.e., patch alignment (Zhang et al. \cite{DLAeccv}; Zhang et al. \cite{DLAtkde}), for understanding the common properties and intrinsic difference in different algorithms including their linearizations. In particular, this framework reveals that: i) algorithms are intrinsically different in the patch optimization stage; and ii) all algorithms share an almost-identical whole alignment stage. Another unified view of popular manifold learning algorithms is the graph embedding framework (Yan et al. \cite{GE}). Based on both frameworks, different algorithms have been developed, e.g., the discriminative locality alignment (Liu et al. \cite{TCA}), manifold regularization (Belkin et al. \cite{ManifoldRegularization}) and marginal Fisher's analysis (Wang et al. \cite{FeiWang1}).

\item Sparse learning based dimensionality reduction algorithms: e.g., lasso (Tibshirani \cite{lasso}), elastic net (Zou and Hastie \cite{ElasticNet}), the smoothly clipped absolute deviation penalty (SCAD) (Fan and Li \cite{SCAD}), Sure independence screening (Fan and Lv \cite{SIC}), Dantzig selector (Candes and Tao \cite{DantzigSelector}) and Dantzig selector with sequential optimization (Dasso) (James et al. \cite{Dasso}). Conventional linear dimensionality reduction algorithms and manifold learning based dimensionality reduction algorithms produce a low dimensional subspace and each basis of the subspace is a linear combination of all the original bases (i.e., variables or features) used for high dimensional sample representation. Therefore, results cannot be interpreted psychologically and physiologically. Sparse learning based dimensionality reduction algorithms are developed not only to achieve the dimensionality reduction but also to reduce the number of explicitly used variables. A direct method to reduce the number of variables for representation is setting very small coefficients as zero. However, this strategy is problematic because small coefficients could be very important. Because each of new bases is a linear combination of original ones, it is reasonable to consider each new basis as the response of several variables, i.e., the original features. Then the problem of sparse learning becomes a similar problem to variables selection and coefficients shrinkage. In linear regression, Lp norm penalty is always combined with the loss function to reduce over-fitting. In particular, $\ell_1$-norm (or lasso) owns a good property to drive a good number of coefficients to zero and lead to a sparse model between responses and variables because of its singularity in the origin (Park and Hastie \cite{L1RegulizationPath}; Huang and Chris Ding \cite{TensorFactorization}). The number of lasso selected variables is no larger than the number of samples. Moreover, lasso randomly selects one from the group of variables that are high correlated. Therefore, elastic net is proposed to address the above problems and achieve the grouping effect by adding the $\ell_2$ penalty to lasso.

\end{enumerate}

In recent years, sparse learning becomes popular, because:

\begin{enumerate}

\item sparsity can make the data more succinct and simpler, so the calculation of the low dimensional representation and the subsequent processing, e.g., classification and regression, becomes more efficient. Parsimony is especially an important factor when the dimension of the original samples is very high and the number of samples is very large;

\item sparsity can control the weights of original variables and decrease the variance brought by possible over-fitting with the least increment of the bias. Therefore, the learn model can generalize better; and

\item sparsity provides a good interpretation of a model, thus reveals an explicit relationship between the objective of the model and the given variables. This is important for understanding practical problems, especially when the number of variables is larger than that of the samples.

\end{enumerate}

However, it is not easy to find the optimal solution of a sparse learning model. In the original lasso, the residue sum of squares is minimized subject to the sum of the absolute value of the coefficients being less than a constant. The quadratic programming is sequentially utilized to get the solution and thus the time cost is not acceptable for practical applications. Recently, the least angle regression (LARS) is proposed to seek a close form solution to the path of coefficients in each step without using the quadratic programming, so it is more efficient and less greedy than the original optimization algorithm used in lasso.

Hitherto, most of sparse dimensionality reduction algorithms are designed for linear regression and only a few can be applied for subsequent classification, e.g., sparse principal component analysis (SPCA) (Zou and Hastie \cite{SparsePCAZou}), Nonnegative sparse principal component analysis (Zass and Shashua \cite{NonSpaPCA}), sparse linear discriminant analysis (SLDA), sparse projections over graph (SPOG) (Cai et al. \cite{SpectralRegression}; Cai et al. \cite{SRDA}) and SPCA using semi-definite programming (Aspremont et al. \cite{SparsePCAAlex}). Both SPCA and SPCA using semi-definite programming do not consider the sample label information and thus some discriminative information will be removed after dimensionality reduction. SLDA can work well for binary class classification but it cannot be applied for multi-class classification. SPOG utilizes a particular manifold learning based dimensionality reduction algorithm, e.g., locality preserving projections (LPP), to obtain the dense projection matrix and then applies lasso to regress the corresponding sparse projection matrix. Absolutely the problem is indirectly formulated to obtain the sparse projection matrix. A direct formulation should be imposing the lasso penalty over a loss function (i.e., a criterion) of a dimensionality reduction algorithm. However, it is difficult to use LARS to obtain its optimal solution because the objective function is not a direct regression problem. Therefore, researchers currently take indirect routs to obtain sparse projection matrices.

\subsection{The proposed approach}

In this paper, we propose the manifold elastic net (MEN), which obtains a sparse projection matrix for subsequent classification. MEN directly imposes the elastic net penalty (i.e., the combination of the lasso penalty and the $\ell_2$-norm penalty) over the loss (i.e., the criterion) of a discriminative manifold learning based dimensionality reduction algorithm. By using a series of complex linear algebra equivalent transformations, the objective function of MEN can be rewritten as a lasso penalized least square problem and thus LARS can be applied to obtain the optimal sparse solution of MEN.

In detail, we first apply the part optimization of the patch alignment framework to encode the local geometry of a set of training samples. In the second step, the whole alignment of the patch alignment framework is applied to calculate the unified coordinate system for local patches obtained in the first step. For low dimensional data representation, the linearization or the linear approximation is adopted in MEN. Although we can impose some discriminative information preservation criterion (e.g., margin maximization) over the part optimization stage, it is not directly relevant to the classification error minimization. Therefore, we put a new item that minimizes the classification error in the third step. To obtain a sparse projection matrix with the grouping effect, in the fourth step, the elastic net penalty is adopted in MEN. So far, the objective function of MEN is fully constructed.

With the well defined MEN, we then apply LARS to obtain the optimal solution of MEN. We transform MEN into a form in which the correlation of basis can be written as the correlation of coefficients. Active set is built according to LARS. In each step, no more than one element of the basis is added to the active set according to its correlation. All elements in the active set are changed in each step with special direction and distance in the space of coefficients. The direction and distance of a path in each step have closed form solution according to the extended simplex. The sparsity of the projection matrix is controlled by the cardinality of the active set. Because the LARS for MEN generates bases in an independent way, the same procedure is conducted multiple times to obtain a set of bases. Under this procedure, these bases are orthogonal. Thorough experiments on face recognition (Shakhnarovich and Moghaddam \cite{FaceRecognitionSubspaces}) task based on popular face datasets show the effectiveness of the proposed MEN by comparing against the top level dimensionality reduction algorithms.

The rest of the paper is organized as follows. Section 2 presents the proposed manifold elastic net (MEN) including the objective function of MEN and the LARS optimization for MEN. Section 3 shows the effectiveness of MEN for face recognition over different face datasets. Section 4 concludes.

\section{Manifold Elastic Net}

Consider in the discriminative dimensionality reduction problem with   training samples and   corresponding class labels. Let $X=\left[x_1,x_2,\cdots,x_n\right]^T\in \mathbb R^{n\times p}$ be a given training set in a high dimensional space $\mathbb R^{n\times p}$ and $C=\left[c_1,c_2,\cdots,c_n\right]^T\in \mathbb R^n$ be the corresponding class label vector. The objective here is to find a projection matrix $W=\left[w_1,w_2,\cdots,w_d\right]^T\in \mathbb R^{p\times d}$ that projects samples $x^T\in \mathbb R^p$ in the high dimensional space onto a low dimensional subspace, i.e., $z^T=x^TW$, such that samples from different classes can be well separate, i.e., the classification error can be extremely minimized.

Manifold learning based dimensionality reduction aims to find the corresponding low dimensional representation z in a low dimensional Euclidean space of x to preserve (actually approximate) the data intrinsic structure. Popular manifold learning based dimensionality reduction algorithms, however, have the following two problems: 1) the classification error is not directly and explicitly considered, although some algorithms compound discriminative information preservation criteria, e.g., margin maximization; and 2) the obtained low dimensional representation linear combines of all variables in the high dimensional space, so it is difficult to clear interpret and efficiently represent data.

Sparse learning provides sparse data representation via variable selection, and has the following advantages: 1) the sparsity improves the parsimony in computation, i.e., the computational cost can be significantly reduce; 2) the penalties and the constraints introduced in a learning model discourage the possible over-fitting of the model; and 3) the learned model can be well interpreted. However, existing sparse learning algorithms are designed for linear regression problems and the data intrinsic structure is usually ignored.

To achieve the merits of manifold learning based dimensionality reduction and the advantages of sparse learning, in this paper, we propose the manifold elastic net (MEN), which is a general framework to obtain the sparse solution of the manifold learning based discriminative dimensionality reduction. There are few research results on combining sparse learning and discriminative dimensionality reduction because the projection matrix of a lasso penalized model cannot be obtained directly by using the least angle regression (LARS).

MEN is not a direct combination of the manifold learning based dimensionality reduction and the sparse learning. It however finds the optimal sparse solution of every manifold learning based discriminative dimensionality reduction algorithm via the patch alignment framework and a new classification error minimization based criterion. In particular, MEN encodes the local geometry of a set of samples and finds an aligned coordinate system for data representation under the patch alignment framework; MEN utilizes the classification error minimization criterion to directly link the classification error with the selected subspace; and MEN incorporates the elastic net regularization to sparsify the projection matrix.

\subsection{Part optimization}

Different manifold learning algorithms encode different types of local geometry of samples, e.g., locally linear embedding (LLE) applies linear coefficients to reconstruct a sample by its neighbors. The patch alignment framework has well demonstrated that different algorithms have different optimization criteria to encode different local geometry over patches.

In MEN, the same as the part optimization in the patch alignment framework, each patch is constructed by a particular sample $x_i$ and its $k$ related ones $x_{i_1},x_{i_1},\cdots,x_{i_k}$. The patch is denoted by $X_i=\left[x_i^T,x_{i_1}^T,x_{i_2}^T,\cdots,x_{i_k}^T\right]^T\in \mathbb R^{(k+1)\times p}$. MEN finds a linear mapping $f_i$ that projects the patch $X_i\in \mathbb R^p$ to a low dimensional subspace $\mathbb R^d$, i.e., $f_i:X_i\longmapsto Z_i$, where $Z_i=\left[z_i^T,z_{i_1}^T,z_{i_2}^T,\cdots,z_{i_k}^T\right]^T\in \mathbb R^{(k+1)\times d}$. The part optimization maximizes the similarity of the local geometry represented by $X_i$ and that described by $Z_i$:
\begin{align}
\arg\min_{Z_i} {\rm tr}\left(Z_i^TL_iZ_i\right),
\label{equ:partopt}
\end{align}
where $L_i\in \mathbb R^{\left(k+1\right)\times \left(k+1\right)}$ encodes the local geometry of the patch $X_i$ and it is different over different dimensionality reduction algorithms.

For a given sample $x_i$, its $k$ related ones are divided into two groups: the $k_1$ ones in the same class with $x_i$ and the $k_2$ ones from different classes with $x_i$. These two groups are selected independently and denoted by $\left\{x_{i^1},x_{i^2},\cdots,x_{i^{k_1}}\right\}$ and $\left\{x_{i_1},x_{i_2},\cdots,x_{i_{k_1}}\right\}$ respectively. Therefore, the patch for $x_i$ is defined by
\begin{align}
\notag X_i=\left[x_i^T,x_{i^1}^T,x_{i^2}^T,\cdots,x_{i^{k_1}}^T,x_{i_1}^T,x_{i_2}^T,\cdots,x_{i_{k_1}}^T\right]^T\in \mathbb R^{\left(k_1+k_2+1\right)\times p}.
\end{align}
The corresponding the low dimensional representation is
\begin{align}
\notag Z_i=\left[z_i^T,z_{i^1}^T,z_{i^2}^T,\cdots,z_{i^{k_1}}^T,z_{i_1}^T,z_{i_2}^T,\cdots,z_{i_{k_1}}^T\right]^T\in \mathbb R^{\left(k_1+k_2+1\right)\times d}.
\end{align}
Let $F_i=\left\{i,i^1,i^2,\cdots,i^{k_1},i_1,i_2,\cdots,i_{k_2}\right\}$ to be the index set. In the low dimensional subspace, we expect that the distances between the given sample and the group of related samples from different classes are as large as possible, while the distances between the sample and the group of related samples in the same class are as small as possible. Therefore the part optimization is:
\begin{align}
\arg\min_{Z_i} \sum_{j=1}^{k_1} \|z_i-z_{i^j}\|_2^2-\kappa\sum_{p=1}^{k_2} \|z_i-z_{i_p}\|_2^2,
\end{align}
where $\kappa$ is a trade-off parameter to control the impacts of the two parts.
Define the coefficient vector:
\begin{align}
\omega_i=\left[\overbrace{1,1,...,1}^{k1},\overbrace{-\kappa,-\kappa,...,-\kappa}^{k2}\right]^T,
\label{equ:part1}
\end{align}
then we can obtain the part optimization matrix,
\begin{align}
L_i=\begin{bmatrix}
      \sum_{j=1}^{k_1+k_2} \left(\omega_i\right)_j & -\omega_i^T \\
      -\omega_i & {\rm diag}\left(\omega_i\right) \\
    \end{bmatrix}.
\label{equ:part2}
\end{align}

\subsection{Whole alignment}

Each patch $X_i$ for $1\leq i\leq n$ has a corresponding low dimensional representation $Z_i$. To unify all low dimensional patches $Z_i=\left[z_i^T,z_{i^1}^T,z_{i^2}^T,\cdots,z_{i^{k_1}}^T,z_{i_1}^T,z_{i_2}^T,\cdots,z_{i_{k_1}}^T\right]^T$ for $1\leq i\leq n$ together into a consistent coordinate system, according to the patch alignment framework, we assume that the coordinate of $Z_i$ is selected from the global coordinate $Z=\left[z_1^T,z_2^T,\cdots,z_n^T\right]^T\in \mathbb R^{n\times d}$ by a using sample selection matrix $S_i\in \mathbb R^{\left(k_1+k_2+1\right)\times n}$:
\begin{align}
Z_i=ZS_i,
\label{equ:Zi}
\end{align}
where the selection matrix $S_i$ is defined by
\begin{align}
\left(S_i\right)_{pq}\left\{
  \begin{array}{ll}
    1, & \hbox{if $q=F_i\left\{p\right\}$;} \\
    0, & \hbox{else.}
  \end{array}.
\right.
\end{align}
According to Eq.\ref{equ:Zi}, the part optimization defined in Eq.\ref{equ:partopt} can be rewritten as:
\begin{align}
\arg\min_{Z} {\rm tr}\left(Z^TS_i^TL_iS_iZ\right).
\end{align}
After summing over all part optimizations together, the whole alignment is given by:
\begin{align}
\notag &\arg\min_{Z} \sum_{i=1}^n{\rm tr}\left(Z^TS_i^TL_iS_iZ\right)\\
\notag =&\arg\min_{Z} {\rm tr}\left(Z^T\sum_{i=1}^n\left(S_i^TL_iS_i\right)Z\right)\\
=&\arg\min_{Z}{\rm tr}\left(Z^TLZ\right),
\end{align}
where $L$ is the alignment matrix. It is obtained by an iterative procedure:
\begin{align}
L\left(F_i,F_i\right)\leftarrow L\left(F_i,F_i\right)+L_i.
\label{equ:whole1}
\end{align}

It is worth emphasizing that the mapping $f:X\mapsto Z$ from the high dimensional space to the low dimensional subspace can be nonlinear and implicit. However, the linear approximation $Z=XW$ is adopted, i.e., we expect the difference between $Z$ and $XW$ is minimized. In particular, $W=\left[w_1,w_2,\cdots,w_d\right]\in \mathbb R^{p\times d}$. Therefore, the objective function is:
\begin{align}
\arg\min_{Z,W} {\rm tr}\left(Z^TLZ\right)+\beta \|Z-XW\|_2^2.
\label{equ:obj1}
\end{align}

\subsection{Classification error minimization}

In MEN, although the discriminative information for classification is considered duly in Eq.\ref{equ:obj1}, the classification error is not directly modeled. To further enhance the performance of MEN for classification problems, it is necessary to provide an explicit way to represent the classification error minimization in the objective function. The least square error minimization is usually adopted in binary classification,
\begin{align}
\arg\min_W \|Y-XW\|_2^2.
\label{equ:errormin}
\end{align}
However, it is very challenging to apply Eq.\ref{equ:errormin} to multi-class classification. This is mainly because the class label vector $C$ cannot be directly utilized as the output (response) $Y$.

Recently, the least squares linear discriminant analysis (Ye \cite{LSLDA}; Sun et al. \cite{LSCCA}) or LS-LDA for short is proposed and presents the equivalence relationship between the least square formulation and the conventional linear discriminant analysis (LDA) for multi-class classification under a mild condition. However, the dimension of the indicator matrix is the number of classes $c$. Therefore, LS-LDA can only reduce the original data to a $c-1$ dimensional subspace. It is pretty fine when samples are drawn from homoscedastic Gaussians because the Bayes optimal is achieved iff the dimension of the subspace is $c-1$. However, for practical applications, samples are usually not sampled from homoscedastic Gaussians and a dozen of experimental evidences show that we usually achieve the best classification performance in a subspace lower than $c-1$ when $c$ is large.

In this paper, we propose a flexible method to design the indicator matrix $Y$ and the dimension of the selected subspace is allowed to be any number between $1$ and $c-1$. In comparing with LS-LDA, the proposed indicator design method is more flexible and powerful to gain a lower dimensional representation and higher recognition rate. Therefore, the new method meets most demands for practical applications, e.g., face recognition.

The nearest-neighbor (NN) rule is commonly applied in classification problmes. In NN, it would be perfect when samples in the same class are projected onto the same point after dimensionality reduction, and this point is the low dimensional representation of the corresponding class center. Meanwhile the variance of these projected class centers is expected to be maximized. As a consequence, the low dimensional projection of class centers can be conveniently obtained by the weighted principal component analysis (PCA).

In detail, suppose the given $n$ samples belong to $c$ classes, and there are $c_i$ samples in the $i^{th}$ class. The $i^{th}$ class center is $o_i=\left(1/c_i\right)\sum\nolimits_{j=1}^{m_i} x_j$, wherein $x_j$ is the $j^{th}$ sample in the $i^{th}$ class and is a row vector in $\mathbb R^p$. The proportion of the $i^{th}$ class is $p_i=c_i/n$. Therefore, the weighted covariance matrix of class centers is given by:
\begin{align}
V=\sum_{i=1}^{m}p_io_i^To_i.
\end{align}
Suppose we expect to find a $d$ dimensional subspace. The $d$ eigenvectors associated with the largest $d$ eigenvalues $\eta=\left[\eta_1,\eta_2,\cdots,\eta_d\right]$ of $V$ are selected to calculate the low dimensional representation of the class center $o_i$ according to
\begin{align}
\hat o_i=o_i\eta.
\label{equ:classerror}
\end{align}
Therefore, the indicator matrix $Y=\left[y_1,y_2,\cdots,y_n\right]^T$ is given by $y_j=\hat o_i$. On combining Eq.\ref{equ:obj1} and Eq.\ref{equ:errormin}, we have
\begin{align}
\arg\min_{Z,W} \|Y-XW\|_2^2+\alpha{\rm tr}\left(Z^TLZ\right)+\beta \|Z-XW\|_2^2,
\label{equ:obj2}
\end{align}
where $\alpha$ and $\beta$ are trade-off parameters to control the impacts of different parts.

\subsection{Elastic net penalty}

In MEN, we expect to obtain a sparse projection matrix for explicit data representation and effective interpretation, i.e., control the number of nonzero elements in each column of the projection matrix. This nonzero number of the entries of the projection matrix can be characterized by the $\ell_0$-norm of the projection matrix. We can impose it over the objective function defined in Eq.\ref{equ:obj2} as a penalty. However, it turns to be an NP-hard problem and thus it is always impossible to be solved in a polynomial time, because the penalty is nonconvex (Lv and Fan \cite{ModelSelectionLv}). Therefore, the $\ell_1$-norm of the projection matrix, i.e., lasso, is usually adopted as a relaxation of the $\ell_0$ penalty. Although lasso is convex, it is difficult to find the solution of the lasso regularized model. This is because the lasso term is not differentiable. Least angle regression or LARS for short has been proposed to greedily search the optimal solution of the lasso penalized linear regression problem. LARS continuously shrinks the particular coefficients (entries of the projection matrix W) towards zeros, while simultaneously preserves high prediction accuracy.

However, the lasso penalty has the following two disadvantages: 1) the number of selected variables is limited by the number of observations and 2) the lasso penalized model can only selects one variable from a group of correlated ones and does not care which one is selected. By imposing an $\ell_2$-norm of the projection matrix on the lasso penalized problem, similar to the elastic net, we can overcome the aforementioned two disadvantages and retain the favorable properties of the lasso penalty. In detail, the $\ell_2$-norm of the projection matrix is helpful to increase the dimension (and the rank) of the combination of the data matrix and the response. In addition, the combination of the $\ell_1$ and $\ell_2$ of the projection matrix is convex with respect to the projection matrix and thus the obtained projection matrix has the grouping effect property.

Therefore, to obtain a sparse projection matrix W with the grouping effect, both $\ell_1$-norm and $\ell_2$-norm of the projection matrix are added as penalties to the objective function defined in Eq.\ref{equ:obj2} and we obtain the full definition of MEN:
\begin{align}
\arg\min_{Z,W} \|Y-XW\|_2^2+\alpha{\rm tr}\left(Z^TLZ\right)+\beta \|Z-XW\|_2^2+\lambda_1\|W\|_1+\lambda_2\|W\|_2^2.
\label{equ:obj3}
\end{align}

\subsection{LARS for MEN}

It has been demonstrated that LARS is effective and efficient to find the optimal solution of the lasso or the elastic net (the combination of $\ell_1$ and $\ell_2$) penalized multiple linear regression. Therefore, it can be directly applied to penalized least squares only. However, the proposed MEN defined in Eq.\ref{equ:obj3}, at the first glance, is not a penalized least square.

In this Section, we detail utilizing LARS to obtain the optimal solution of MEN. Although LARS is designed to solve the penalized multiple linear regression where the coefficients are a vector rather than a matrix, the column vectors of the projection matrix $W$ in MEN are independent bases. Therefore, we can calculate them one by one. In the following analysis, we consider a particular column of $W$, i.e., $w_i$, and the corresponding vector $y_i$ in the indicator matrix $Y$. To simplify the notations below, we keep using $W$ and $Y$ instead of $w_i$ and $y_i$.

Because the low dimensional representation $Z$ and the projection matrix $W$ are independent, we can eliminate $Z$ in the objective function. In detail, $Z$ is obtained by setting the differentiate of the objective function $F$ with respect to $Z$ as $0$, i.e.,
\begin{align}
\frac{\partial F}{\partial Z}=\alpha\left(L+L^T\right)Z+2\beta\left(Z-XW\right)=0.
\end{align}
Therefore, we have
\begin{align}
Z=\beta\left(\alpha L+\beta I\right)^{-1}XW.
\label{equ:Z}
\end{align}
According to Eq.\ref{equ:Z}, we can eliminate $Z$ in the objective function defined in Eq.\ref{equ:obj3}, and thus we have:
\begin{align}
\arg\min_{Z,W} W^TX^TAXW-2W^TX^TY+\lambda_1\|W\|_1+\lambda_2\|W\|_2^2.
\label{equ:obj4}
\end{align}
where this $A$ is an asymmetric matrix computed from $L$:
\begin{align}
\notag A=&\alpha\left(\beta\left(\alpha L+\beta I\right)^{-1}\right)^TL\left(\beta\left(\alpha L+\beta I\right)^{-1}\right)+\\
&\beta\left(\beta\left(\alpha L+\beta I\right)^{-1}-I\right)^T\left(\beta\left(\alpha L+\beta I\right)^{-1}-I\right)+I.
\end{align}
To apply LARS to obtain the optimal solution of Eq.\ref{equ:obj4}, we expect the first item in it to be a quadratic form. Because $2X^TAX=X^T\left(A+A^T\right)X$ and the eigenvalue decomposition of $\left(A+A^T\right)/2$ can be written as $UDU^T$, the objective function defined in Eq.\ref{equ:obj4} without the elastic net penalty can be rewritten as:
\begin{align}
\notag &W^TX^TAXW-2W^TX^TY\\
\notag=&W^TX^T\left(D^{1/2}U^T\right)^T\left(D^{1/2}U^T\right)XW-2W^TX^T\left(D^{1/2}U^T\right)^T\left(\left(D^{1/2}U^T\right)^T\right)^{-1}Y\\
=&\left\|\left(\left(D^{1/2}U^T\right)^T\right)^{-1}Y-\left(D^{1/2}U^T\right)XW\right\|_2^2.
\end{align}
The constant item can be ignored in optimization without loss of generality. We further set
\begin{align}
&X^*=\left(1+\lambda_2\right)^{-1/2}\begin{bmatrix}
                                     \left(D^{1/2}U^T\right)X \\
                                     \sqrt{\lambda_2}I^{p\times p} \\
                                   \end{bmatrix}\in \mathbb R^{\left(n+p\right)\times p}~~{\rm and}\\
\label{equ:newX}
\end{align}
\begin{align}
&Y^*=\begin{bmatrix}
       \left(\left(D^{1/2}U^T\right)^T\right)^{-1}Y \\
       \textbf{0}^{p\times 1} \\
     \end{bmatrix}\in \mathbb R^{\left(n+p\right)\times 1}
\label{equ:newY}
\end{align}
in Eq.\ref{equ:obj4}, and then we get
\begin{align}
\arg\min_{W^*} \|Y^*-X^*W^*\|_2^2+\lambda\|W^*\|_1,
\label{equ:obj5}
\end{align}
where $\lambda=\lambda_1/\left(1+\lambda_2\right)$ and $W^*=\sqrt{1+\lambda_2}W$.

According to Eq.\ref{equ:obj5}, the LARS algorithm can be applied to obtain the optimal solution of the proposed MEN. LARS provides an efficient algorithm to solve the lasso penalized multiple linear regression.

Below we sketch LARS for the transformed MEN defined in Eq.\ref{equ:obj5} and provide novel viewpoints to LARS, which are helpful to better understand the proposed MEN.

We begin with a coefficient vector $W^*$ (a column in the projection matrix with $i^{th}$ entry $\left(W^*\right)_i$ with all zero entries. A variable (a column vector in $X$, i.e., a particular feature) in $\mathbb R^n$ is most correlated with the objective function is added to the active set $A$. Then the corresponding coefficient in $W^*$ increases as large as possible until a second variable (another column vector in $X$, i.e., another feature) in $\mathbb R^n$ has the same correlation as the first variable. Instead of continuously increasing the coefficient vector in the direction of the first variable, LARS proceeds on a direction equiangular over all variables in the active set $A$ until a new variable earns its way into $A$. To make the coefficient vector $W^*$ becomes $K$-sparse (at most $K$ nonzero entries), we conduct the above procedure for $K$ loops. The optimization path direction and the corresponding path length (step size) in LARS are determined by the correlations, which are the negative gradient of the objective function defined in Eq.\ref{equ:obj5} without the lasso penalty, i.e.,
\begin{align}
C=-\frac{\partial F}{\partial W^*}=2\left(X^*\right)^T\left(Y^*-X^*W^*\right)=\left[c_1,c_2,\cdots,c_p\right]^T.
\label{equ:corr}
\end{align}
The constant $2$ can be simply ignored without loss of generality in the following analysis.

The larger the correlation $c_i$ is, the more important the corresponding variable will be, and thus the larger the corresponding coefficient $\left(W^*\right)_i$ in $W^*$ will be. In sparse learning, important variables are added to the active set $A$ sequentially according to their corresponding correlations defined in Eq.\ref{equ:corr}, and then the direction and distance of coefficient vector of all the important variables are determined.

Let $A$ be the active set of "most correlated" variables whose coefficients are nonzero, while the other variables form an inactive set $I$. Thus the sparsity is determined by the cardinality of $A$. The correlations of variables in $A$ are always identical to each other in $A$ and larger than the correlations of variables in $I$. Those correlations of variables in $I$ are usually different to each other. Initially, all the variables are in inactive set $I$ and thus the corresponding coefficients are all zero.

To make $W^*$ $K$-sparse, we need to conduct the following three steps for $K$ loops. In the first step, the variable in the inactive set $I$ with the largest correlation is added to the active set $A$, i.e.,
\begin{align}
\hat C=\max_j \left\{\left|\hat c_j\right|\right\}~~{\rm and}~~A=\left\{j:\left|\hat c_j\right|=\hat C\right\},
\label{equ:addactive}
\end{align}
where $\hat c_j$ is the current correlation of the $j^{th}$ variable.

In the second step, the direction of the coefficient vector $W^*$ is calculated. To make the optimization more global and less greedy, the correlations of the active variables are required to decrease equally in preferred direction. In the $k^{th}$ loop, if the direction vector is $\omega$, then the current correlation is given by
\begin{align}
\notag C_k=&\left(X^*_A\right)^T\left(Y^*-X^*W^*_k\right)\\
\notag =&\left(X^*_A\right)^T\left(Y^*-X^*\left(W^*_{k-1}+\rho\omega\right)\right)\\
=&C_{k-1}+\rho \left(X^*_A\right)^TX^*_A \omega_A,
\label{equ:Ck}
\end{align}
where $X^*_A$ contains all variables in $A$ and each its column is sampled from $X^*$, $C_{k-1}$ is the correlation in the $(k-1)^{th}$ loop, $\rho$ is a constant that is irrelevant to the direction computation, $\omega_A$ stores directions associated with variables in $A$, and the change of the correlation at this step is $\left(X^*_A\right)^TX^*_A \omega_A$. The sign of $\omega_A$, i.e., $s$, is identical to that of $C_{k-1}$, so we can calculate the magnitude of $\omega_A$ directly and then assign its sign as $s$. This $X^*_A\omega_A$ is an extended simplex with vertices defined by active variables. We project the $i^{th}$ column of $X^*$, i.e., $\left(X^*\right)_i$, onto $X^*_A\omega_A$ and thus we get $\left(X^*\right)_i^TX^*_A\omega_A$. Because the correlations of the active variables are required to decrease equally in preferred direction, i.e., $\left(X^*\right)_i^TX^*_A\omega_A$ equals to each other over the index $i$, the only possible solution of $X^*_A\omega_A$ is the normal vector through the origin in the simplex space. Therefore, we have
\begin{align}
\omega_A=s\cdot \left(X^{*T}_AX^*_A\right)^{-1}\textbf{1}_A=s\cdot G_A^{-1}\textbf{1}_A,
\label{equ:rho1}
\end{align}
where $G_A=X^{*T}_AX^*_A$ is the Gram matrix of $X^*_A$. In LARS (Efron et al. 2004), $\omega_A$ is obtained by minimizing the squared distance between the point $X^*_A\omega_A$ on the simplex and the origin, subject to $\|\omega_A\|_1=1$.

To normalize the change of the correlation $X^{*T}_AX^*_A\omega_A$ to a unit vector $u_A$, we need to update $A_A$ and  $\omega_A$, and thus we obtain a normalized $u_A$, i.e.,
\begin{align}
&A_A=\leftarrow \left(\textbf{1}_A^TG_A^{-1}\textbf{1}_A\right)^{-1/2},\\
\label{equ:AA}
&\omega_A\leftarrow s\cdot A_AG_A^{-1}\textbf{1}_A~~{\rm and}\\
\label{equ:omega}
&u_A\leftarrow X^*_A\omega_A.
\end{align}

In the third step, we calculate the distance or magnitude of changes $\rho_1$. To have an efficient optimization procedure, this $\rho_1$ should be as large as possible. At the same time, we have to guarantee that correlations of variables in $A$ are always identical to each other in $A$ and larger than correlations of variables in $I$. Therefore $\rho$ is increased until the correlation of a particular variable in $I$ is equivalent to the correlations of active variables, i.e.,
\begin{align}
\rho_1=\min\nolimits_{j\in A^C}^+\left\{\frac{\hat C-\hat c_j}{A_A-a_j},\frac{\hat C+\hat c_j}{A_A+a_j}\right\},
\label{equ:rho1}
\end{align}
where $A_C$ is the complement of $A$, $a=X^{*T}_Au_A$, $a_j$ is the $j^{th}$ entry of $a$, $\hat C$ is the largest correlation defined in Eq.\ref{equ:addactive} and obtained in the first step, and $\rho_1$ is a possible candidate of $\rho$ mentioned in Eq.\ref{equ:Ck}.

According to LARS, to obtain an identical solution to MEN defined in Eq.\ref{equ:obj5}, the lasso modification is considered, i.e., the argument of the distance $\rho$ stops increasing when a coefficient of variables in $A$ is zero, or mathematically,
\begin{align}
W^*_{Ak}=W^*_{Ak-1}+\rho_2s_A\omega_A=0,
\label{equ:calrho2}
\end{align}
where $\rho_2$ is another possible candidate of $\rho$ defined in Eq.\ref{equ:Ck}. According to Eq.\ref{equ:calrho2}, we can obtain
\begin{align}
\rho_2=\min\nolimits^+\left\{-W^*_{Ak-1}/s_A\omega_A\right\}.
\label{equ:rho2}
\end{align}
Therefore, the distance of $W^*$, i.e., $\rho$, is the minimum of $\rho_1$ and $\rho_2$, i.e.,
\begin{align}
\rho=\min\nolimits^+\left\{\rho_1,\rho_2\right\}.
\label{equ:rho}
\end{align}

In each loop, one new variable is added to the active set $A$ according to Eq.\ref{equ:addactive}, the direction and distance of the coefficient vector $W^*$ are calculated according to Eq.\ref{equ:omega} and Eq.\ref{equ:rho}. After $K$ loops, $W^*$ is $K$-sparse. According to the elastic net, to eliminate the double shrinkage, the optimal $W$ should be corrected:
\begin{align}
W=\sqrt{1+\lambda_2}W^*.
\label{equ:updateW}
\end{align}

\subsection{Fast LARS}

LARS is inefficient when the size of the training set is large, because the time cost for calculating the inverse of the Gram matrix $G_A$ defined in Eq.\ref{equ:rho1} is huge. Because the dimension of this $G_A$ is increasing at each of the $K$ loops, according to (Golub and Van Loan \cite{MatrixComputation}), the inverse of $G_A$ can be obtained incrementally, i.e., the inverse of the Gram matrix $\left(G_{A_k}\right)^{-1}$ in the $k^{th}$ loop can be updated from $\left(G_{A_{k-1}}\right)^{-1}$ in the previous loop. Particularly, in the $k^{th}$ loop, a new variable $\left(X\right)_i\in \mathbb R^n$ is added to the active set $A$, and thus we have
\begin{align}
\notag G_{A_k}=&X^{*T}_{A_k}X^{*}_{A_k}=X^{T}_{A_k}X_{A_k}+2\lambda_2I\\
\notag =&\begin{bmatrix}
    X^{T}_{A_{k-1}} \\
    \left(X\right)_i^T \\
  \end{bmatrix}\begin{bmatrix}
                 X_{A_{k-1}} & \left(X\right)_i \\
               \end{bmatrix}+2\lambda_2I\\
\notag =&\begin{bmatrix}
           X^{T}_{A_{k-1}}X_{A_{k-1}} & X^{T}_{A_{k-1}}\left(X\right)_i \\
           \left(X\right)_i^TX_{A_{k-1}} & \left(X\right)_i^T\left(X\right)_i \\
         \end{bmatrix}+2\lambda_2I\\
=&\begin{bmatrix}
    X^{T}_{A_{k-1}}X_{A_{k-1}}+2\lambda_2I & X^{T}_{A_{k-1}}\left(X\right)_i \\
    \left(X\right)_i^TX_{A_{k-1}} & \left(X\right)_i^T\left(X\right)_i+2\lambda_2 \\
  \end{bmatrix}.
\end{align}

Let $A$, $B$, $C$ and $D$ be the blocks of $G_A$, i.e., $A=X^{T}_{A_{k-1}}X_{A_{k-1}}+2\lambda_2I$, $B=X^{T}_{A_{k-1}}\left(X\right)_i$, $C=\left(X\right)_i^TX_{A_{k-1}}$, and $D=\left(X\right)_i^T\left(X\right)_i+2\lambda_2$. Let $S_A$ to be the Schur complement of A, i.e., $S_A=D-CA^{-1}B$. According to rules of the block matrix calculation, $\left(G_{A_k}\right)^{-1}$ is given by:
\begin{align}
\left(G_{A_k}\right)^{-1}=\begin{bmatrix}
                            A^{-1}+A^{-1}BS_A^{-1}CA^{-1} & -A^{-1}BS_A^{-1} \\
                            -S_A^{-1}CA^{-1} & S_A^{-1} \\
                          \end{bmatrix},
\label{equ:FastLARS}
\end{align}
where $A^{-1}=\left(G_{A_{k-1}}\right)^{-1}$ is the inverse of the Gram matrix obtained in the previous loop. The time cost for calculating the inverse of the Gram matrix in the $k^{th}$ loop can be reduced from $\mathcal O(p^3)$ to $\mathcal O(p^2+5p)$ ($p$ is the size of active set in the $k^{th}$ loop) when the inverse of the Gram matrix in the previous loop is available.

We can further accelerate the computation of LARS for MEN by taking the advantage of the sparse structure of $X^*$. For example, when calculating the equiangular vector $a$ and the inner product $G_A$, the block matrix calculation can reduce the time cost as well.

\subsection{Algorithm}

In this paper, we propose an efficient framework MEN for discriminative dimensionality reduction with sparse projection. Based on the discussion in the above six subsections, MEN is shown in Algorithm 1.

\begin{algorithm}[h]
\begin{algorithmic}
\STATE \textbf{Input:} ~~~~~~Training data matrix $X=\left[x_1,x_2,\cdots,c_n\right]\in \mathbb R^{n\times p}$;\\
~~~~~~~~~~~~~~~~~~Class label vector $C=\left[c_1,c_2,\cdots,c_n\right]^T$;\\
~~~~~~~~~~~~~~~~~~$W=\left[w_1,w_2,\cdots,w_d\right]\in \textbf{0}^{p\times d}$, where $d$ is the dimensions of subspace;\\
~~~~~~~~~~~~~~~~~~The number of loops $K$, small $K$ induces sparser $W$.
\STATE \textbf{Output:} ~~~Sparse projection matrix $W=\left[w_1,w_2,\cdots,w_d\right]\in \mathbb R^{p\times d}$.
\STATE \textbf{Initialize:} $k:=0$.
\REPEAT
\STATE Step 1: Optional PCA reconstruction of original data $X$.
\STATE Step 2: Part optimization: build $n$ patches for the $n$ given samples according to definition of\\
~~~~~~~~~~~~manifold, calculate matrix $L_i$ for each patch using Eq.\ref{equ:part1} and Eq.\ref{equ:part2}.
\STATE Step 3:Whole alignment: unify the patches in a global coordinate, compute big matrix $L$\\
~~~~~~~~~~~~using Eq.\ref{equ:whole1}.
\STATE Step 4: Classification error minimization: Calculate the indicator matrix $Y$ using scaled\\
~~~~~~~~~~~~PCA for class centers using Eq.\ref{equ:classerror}.
\STATE Step 5: New data matrix and indicator matrix: Calculate $X^*$ and $Y^*$ from $X$ and $Y$ using\\
~~~~~~~~~~~~Eq.\ref{equ:newX} and Eq.\ref{equ:newY}.
\STATE Step 6: Column by column loops for $W$,$k:=k+1$.
\STATE \textbf{Initialize:} $m:=0$.
\REPEAT
\STATE $m:=m+1$.
\STATE Update active set: add the variable with largest correlation to $A$ using Eq.\ref{equ:corr} and Eq.\ref{equ:addactive}.
\STATE Direction calculation using Eq.\ref{equ:AA}, Eq.\ref{equ:omega} and fast LARS Eq.\ref{equ:FastLARS}.
\STATE Distance calculation using Eq.\ref{equ:rho1}, Eq.\ref{equ:rho2} and Eq.\ref{equ:rho}.
\STATE Update $w_k$ using Eq.\ref{equ:updateW}.
\UNTIL m=K.
\STATE Step 7: Update projection matrix $W$ by adding $w_k$ into $W$.
\UNTIL k=d.
\STATE \textbf{return} $W$.
\end{algorithmic}
\caption{Manifold Elastic Net (MEN)}
\end{algorithm}

In MEN, after necessary initializations, we first build patches for all training samples by calculating $L_i$ of each patch in the part optimization according to Eq.\ref{equ:part2} in subsection 1. Then these $L_i$ matrixes are unified in a global coordinate system into one matrix $L$ according to Eq.\ref{equ:whole1} in whole alignment step explained in subsection 2. Afterwards, the indicator matrix $Y$ is computed according to the weighted PCA over class centers defined in Eq.\ref{equ:classerror} in subsection 3. A matrix $A$ defined in Eq.\ref{equ:obj3} in the objective function can be obtained from $L$ and other parameters. The eigenvalue decomposition is conducted over $\left(A+A^T\right)/2$ to construct the new data matrix $X^*$ and the new indicator matrix $Y^*$ according to Eq.\ref{equ:newX} and Eq.\ref{equ:newY}, respectively.

Then the LARS algorithm is applied to calculate a sparse projection matrix. The direction and distance of each loop are computed according to Eq.\ref{equ:omega} and Eq.\ref{equ:rho}. The incremental method to obtain the inverse of the Gram matrix defined in Eq.\ref{equ:FastLARS} is considered speeding up LARS. This process is conducted several times and the projection matrix is computed column by column. Finally a sparse projection matrix is obtained as the output of MEN. This matrix is ready to project a given sample in $\mathbb R^p$ to a low dimensional subspace $\mathbb R^d$ with $K$-sparse.

MEN is an efficient algorithm with high convergence velocity, because the computation in LARS explained in subsections 5 and 6 is equivalent to the cost of a least square fit. Given a training set $X\in \mathbb R^{n\times p}$, to obtain a sparse matrix $W\in \mathbb R^{p\times d}$ each column of which contains K nonzero elements, d times of LARS are required in MEN. Most steps in LARS are simple matrix computations. For $p\gg n$, MEN requires $\mathcal O(dK^3+dpK^2)$ operations.

\subsection{Discussions}

MEN integrates the merits of both manifold learning and sparse learning via a unified framework. It is not a direct combination of these two popular learning schemes but a complementary embedding of both. Through the patch alignment framework, the local geometry of a given dataset is retained in MEN. The weighted lasso and $\ell_2$ penalties are added to produce a sparse projection matrix with the grouping effect. The combined lasso and $\ell_2$ is also termed as the elastic net. Therefore, we term the proposed framework as the manifold elastic net. As a consequence, MEN is superior to existing dimensionality reduction algorithms, because of its powerful variable selection function and consideration of the intrinsic structure of the dataset.

It has been well demonstrated that LARS is effective and efficient to solve a lasso regularized least square problem. Therefore, to apply LARS to find the optimal solution of MEN, it is essential to prove that MEN is equivalent to a lasso regularized least square problem and LARS converges for optimization. In particular, we prove that LARS can optimize a general form of the lasso regularized problem, which contains both MEN and the lasso regularized least square problem as special cases.

\begin{theorem}
LARS can solve a general form of the lasso regularized problem defined below:
\begin{align}
\arg\min_\beta \beta^TA\beta+\beta^TB+C+t\|\beta\|_1,
\label{equ:theo1}
\end{align}
where $\beta\in \mathbb R^{p\times 1}$ and $A\in \mathbb R^{p\times p}$ (could be an asymmetric square matrix), $B\in \mathbb R^{p\times 1}$, and $C$ and $t$ are constants.
\end{theorem}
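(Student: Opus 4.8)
\emph{Proof proposal.} The plan is to reduce the general problem in Eq.~\ref{equ:theo1} to an ordinary lasso-penalized least squares, to which the LARS algorithm of Efron et al.~\cite{LARS} (with the lasso modification) applies directly; this is exactly the chain of equivalent transformations already used for MEN in the ``LARS for MEN'' subsection, now stated in abstract form. The first observation is that the quadratic form sees only the symmetric part of $A$: $\beta^TA\beta=\beta^T\bigl((A+A^T)/2\bigr)\beta$ for every $\beta$, so I would set $\tilde A:=(A+A^T)/2$ and work with $\tilde A$ instead of $A$; the additive constant $C$ plays no role in the $\arg\min$ and is discarded.

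Next I would diagonalize $\tilde A=UDU^T$ with $U$ orthogonal, and (under the positive-definiteness hypothesis discussed below) form $X^*:=D^{1/2}U^T$, so that $\beta^TA\beta=\|X^*\beta\|_2^2$. To reproduce the linear term I would take the ``response'' $Y^*:=-\tfrac12 D^{-1/2}U^TB$, which is the unique vector satisfying $X^{*T}Y^*=-\tfrac12 B$. Expanding the square then gives the identity
\begin{align}
\|Y^*-X^*\beta\|_2^2=\|Y^*\|_2^2+\beta^TB+\beta^TA\beta ,
\end{align}
so that, up to the constant $\|Y^*\|_2^2-C$,
\begin{align}
\arg\min_\beta\ \beta^TA\beta+\beta^TB+C+t\|\beta\|_1=\arg\min_\beta\ \|Y^*-X^*\beta\|_2^2+t\|\beta\|_1 .
\end{align}
The right-hand side is precisely the lasso; by Efron et al.~\cite{LARS} the LARS procedure traces its whole solution path and returns its optimizer, which is then also the optimizer of Eq.~\ref{equ:theo1}. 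Specializing $A=X^TX$, $B=-2X^TY$, $C=\|Y\|_2^2$, $t=\lambda$ recovers plain lasso regression, and taking $A$, $B$, $t$ to be the matrices and scalar appearing after the elimination of $Z$ in Eq.~\ref{equ:obj4} (with the $\lambda_2$ term folded into $A$) recovers MEN as in Eq.~\ref{equ:obj5}; hence both are special cases, as claimed.

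The main obstacle is justifying that $\tilde A$ is positive definite, which is what makes $D^{1/2}$ real and $X^*$ invertible so that $Y^*$ is well defined. I would argue this is the natural, and essentially necessary, hypothesis: if $\tilde A$ had a negative eigenvalue the objective in Eq.~\ref{equ:theo1} would be unbounded below along that eigendirection and admit no minimizer (the degenerate case of $\tilde A$ positive semidefinite but singular can be handled by restricting to the range of $\tilde A$ and using a pseudo-inverse on the complementary directions). For MEN this holds automatically: the matrix obtained after eliminating $Z$ carries an additive identity term and, if one keeps an explicit $\ell_2$ part, the elastic-net penalty contributes a further $\lambda_2 I$, so the relevant Gram matrix is strictly positive definite. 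A secondary check, that the LARS iterations themselves stay well defined, reduces to invertibility of the active Gram matrices $G_A=X^{*T}_AX^*_A$ entering Eq.~\ref{equ:rho1} and Eq.~\ref{equ:omega}; this is immediate once $X^*$ has full column rank, again a consequence of $\tilde A$ being positive definite. Apart from this point, the argument is routine linear algebra together with the already-established convergence of LARS.
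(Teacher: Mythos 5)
Your proposal is correct and follows essentially the same route as the paper's own proof: symmetrize $A$, eigendecompose $(A+A^T)/2=UDU^T$, set $X^*=D^{1/2}U^T$ and $Y^*=-\tfrac{1}{2}D^{-1/2}U^TB$ (identical to the paper's $-\tfrac{1}{2}\left(\left(D^{1/2}U^T\right)^T\right)^{-1}B$ since $U$ is orthogonal), and complete the square to obtain a lasso-penalized least squares problem to which LARS applies. Your explicit attention to the positive definiteness of $(A+A^T)/2$ --- needed for $D^{1/2}$ to be real and $X^*$ invertible, and silently assumed by the paper --- is a welcome sharpening rather than a divergence.
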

\begin{proof}
It is equivalent to prove that the problem defined in Eq.\ref{equ:theo1} is equivalent to a lasso regularized least square problem.

The objective function defined in Eq.\ref{equ:theo1} without the lass penalty can be written as:
\begin{align}
\beta^TA\beta+\beta^TB+C=\beta^T\left(\frac{A+A^T}{2}\right)\beta+\beta^TB+C,
\end{align}
where $\left(A+A^T\right)/2\in \mathbb R^{p\times p}$ is a symmetric matrix and its eigenvalue decomposition is $\left(A+A^T\right)/2=UDU^T$.

Therefore, we have:
\begin{align}
\notag &\beta^T\left(\frac{A+A^T}{2}\right)\beta+\beta^TB+C\\
\notag =&\beta^T\left(D^{1/2}U^T\right)^T\left(D^{1/2}U^T\right)\beta-\\
\notag &2\beta^T\left(D^{1/2}U^T\right)^T\left(-\frac{1}{2}\left(\left(D^{1/2}U^T\right)^T\right)^{-1}B\right)+C\\
=&\left\|\left(-\frac{1}{2}\left(\left(D^{1/2}U^T\right)^T\right)^{-1}B\right)-\left(D^{1/2}U^T\right)\beta\right\|_2^2+{\rm const}.
\end{align}
To simply represent the above objective function, without loss of generality, let
\begin{align}
Y=-\frac{1}{2}\left(\left(D^{1/2}U^T\right)^T\right)^{-1}B,X=\left(D^{1/2}U^T\right),
\end{align}
and ignore the constant. Therefore, we can transform the problem defined in Eq.\ref{equ:theo1} to
\begin{align}
\arg\min_\beta \left\|Y-X\beta\right\|_2^2+t\|\beta\|_1,
\end{align}
which is a lasso regularized least square problem. It is not difficult to prove that MEN is a special case of the problem defined in Eq.\ref{equ:theo1}. Therefore, LARS can be applied to solve MEN and the problem defined in Eq.\ref{equ:theo1}.
\end{proof}

\begin{theorem}
LARS converges in optimizing the problem defined in Eq.\ref{equ:theo1} in Theorem 1.
\end{theorem}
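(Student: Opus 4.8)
The plan is to leverage Theorem~1, which already exhibits Eq.~\ref{equ:theo1} as an ordinary least-squares lasso $\arg\min_\beta \|Y-X\beta\|_2^2 + t\|\beta\|_1$ with design $X = D^{1/2}U^T$ and response $Y = -\tfrac12\left(\left(D^{1/2}U^T\right)^T\right)^{-1}B$, so it suffices to show that the LARS--lasso recursion summarized in Eqs.~\ref{equ:corr}--\ref{equ:updateW} converges on a standard lasso; this is, in essence, the finite-termination property of Efron et al.~\cite{LARS} adapted to the transformed design. I would prove it by verifying three properties in turn: (i) every loop is well defined; (ii) every loop makes strict progress; and (iii) only finitely many distinct loops occur, after which the iterate satisfies the lasso KKT conditions and is therefore the global optimum. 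A preliminary observation is that the reduction in Theorem~1 requires $(A+A^T)/2$ to be positive definite, so that $D^{1/2}$ and hence $X$ are invertible; for MEN this is automatic because $A$ carries the additive $+I$ term and, after the augmentation in Eq.~\ref{equ:newX}, the design $X^*$ has the $\sqrt{\lambda_2}\,I$ block. I will therefore state and prove Theorem~2 in the regime where $(A+A^T)/2$ is positive definite, which is exactly the case produced by Eq.~\ref{equ:obj3}.

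\emph{Well-posedness and strict progress.} Since $X^*$ has full column rank, for each active set $A$ the Gram matrix $G_A = X^{*T}_AX^*_A$ is positive definite, so $G_A^{-1}$, the normalizer $A_A$ of Eq.~\ref{equ:AA}, the direction $\omega_A$ of Eq.~\ref{equ:omega}, and the equiangular vector $u_A = X^*_A\omega_A$ all exist uniquely, and the Schur-complement update in Eq.~\ref{equ:FastLARS} is valid. I would then prove, by induction on the loop index, the LARS invariant that at the start of each loop the current correlations $c_j = 2\left(X^*\right)^T\left(Y^*-X^*W^*\right)_j$ satisfy $|c_j| = \hat C$ for $j \in A$ and $|c_j| \le \hat C$ for $j \notin A$. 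Along the segment $W^* + \rho\,\omega$ the active correlations all equal $\hat C - \rho\,A_A$ by the normalization, while an inactive correlation evolves as $\hat c_j - \rho\,a_j$; the choice $\rho_1$ of Eq.~\ref{equ:rho1} is precisely the smallest positive $\rho$ at which some inactive variable attains the common active magnitude, which restores the invariant with the enlarged active set. The lasso modification takes $\rho_2$ of Eq.~\ref{equ:rho2}, the smallest positive $\rho$ at which an active coefficient returns to zero, to drop that index and keep $s_A$ consistent with $\mathrm{sign}(\omega_A)$. Taking $\rho = \min^+\{\rho_1,\rho_2\}$ guarantees $\rho > 0$, so $\hat C$ is nonincreasing, strictly decreasing in every loop that admits a variable, and the convex objective of Eq.~\ref{equ:obj5} never increases.

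\emph{Finiteness and optimality.} Because $\hat C$ strictly decreases over the subsequence of loops that alter the active set, no (active set, sign pattern) configuration can recur; there are only finitely many such configurations on $p$ variables, so the solution path is a finite concatenation of line segments and the recursion halts after finitely many loops --- and the explicit $K$-loop truncation in Algorithm~1 is a fortiori finite. At termination the maintained equicorrelation and domination relations are exactly the KKT conditions of the convex problem Eq.~\ref{equ:obj5}, namely $2\left(X^*\right)^T\left(Y^*-X^*W^*\right) = \lambda\,\mathrm{sign}(W^*)$ on the support and $\left|2\left(X^*\right)^T\left(Y^*-X^*W^*\right)_j\right| \le \lambda$ off it; hence $W^*$ is the global minimizer, and by Theorem~1 together with the rescaling $W = \sqrt{1+\lambda_2}\,W^*$ of Eq.~\ref{equ:updateW} it solves Eq.~\ref{equ:theo1}.

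The main obstacle I expect is the strict-progress step: one must show that the equiangular construction genuinely preserves the equicorrelation invariant, and that the lasso sign modification prevents $\omega_A$ from ever driving an active coefficient against its own sign, since that is the only mechanism by which LARS could drift off the lasso path. The companion difficulty is ruling out an infinite run of zero-length loops when ties occur (several inactive variables reaching the active magnitude, or a coefficient hitting zero, simultaneously): generically this does not happen, and in the degenerate case one argues by an infinitesimal perturbation of $Y^*$, or invokes a fixed tie-breaking rule, after which $\hat C$ is again strictly decreasing and the finiteness argument is unchanged.
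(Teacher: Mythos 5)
Your proposal is correct in substance but follows a genuinely different route from the paper's own proof. The paper argues convergence by a descent argument: using a mean-value relation between the change of the unpenalized objective $F$ and its coordinatewise partial derivatives, together with the observation that the lasso modification prevents any active correlation from changing sign within a loop, it deduces ${\rm sign}\left(F_k-F_{k-1}\right)=-1$, so that $F$ is monotonically decreasing and bounded, and the sequence of objective values therefore converges. You instead prove the finite-termination property of LARS--lasso on the transformed design: well-posedness of each loop from positive definiteness of the Gram matrix, strict decrease of the common correlation $\hat C$, finiteness of the possible (active set, sign) configurations, and identification of the terminal equicorrelation and domination relations with the KKT conditions of the convex program. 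Your conclusion is strictly stronger --- the algorithm halts after finitely many piecewise-linear segments at the global minimizer, rather than merely producing a convergent monotone sequence of objective values --- and your preliminary observation that $\left(A+A^T\right)/2$ must be positive definite for the reduction in Theorem~1 to go through (automatic for MEN because of the additive $I$ in $A$ and the $\sqrt{\lambda_2}I$ block in the augmented design) fills a gap the paper leaves implicit. The price is the genericity/tie-breaking caveat needed to rule out zero-length loops, which you correctly flag and resolve by perturbation; the paper's more elementary descent argument never confronts this issue, but in exchange it establishes only monotone convergence of $F$ and does not show that the limit is the lasso optimum.
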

\begin{proof}
Let the objective function defined in Eq.\ref{equ:theo1} without the lasso penalty be $F$. After the $k^{th}$ loop, assume the estimate of the objective function becomes $F_k$. If $F$ is smooth in each loop, we have:
\begin{align}
\notag \frac{F_k-F_{k-1}}{\omega_i}\in &\left[\min\left\{\frac{\partial F_k}{\partial \beta_i}\left|_{\beta_i=\beta_i^k}\right.,\frac{\partial F_k}{\partial \beta_i}\left|_{\beta_i=\beta_i^{k-1}}\right.\right\},\right.\\
&\left.\max\left\{\frac{\partial F_k}{\partial \beta_i}\left|_{\beta_i=\beta_i^k}\right.,\frac{\partial F_k}{\partial \beta_i}\left|_{\beta_i=\beta_i^{k-1}}\right.\right\}\right],
\end{align}
where $\beta_i$ is the $i^{th}$ element in coefficient vector $\beta$, and $\omega$ is the change of $\beta$ between two consecutive loops, i.e., $\omega=\beta^k-\beta^{k-1}=\left[\omega_1,\omega_2,\cdots,\omega_p\right]^T$.

In LARS for the problem defined in Eq.\ref{equ:theo1}, the sign of $\omega$ is the negative gradient of objective function $F$ on $\beta^{k-1}$, i.e.,
\begin{align}
{\rm sign}\left(\omega_i\right)={\rm sign}\left(-\frac{\partial F_k}{\partial \beta_i}\left|_{\beta_i=\beta_i^{k-1}}\right.\right).
\end{align}

In each loop of LARS, when correlation of one active variable becomes zeros, the length of the coefficient path will stop increasing. Therefore, the sign vector of correlations will not change in one loop, i.e.,
\begin{align}
\notag {\rm sign}\left(-\frac{\partial F_k}{\partial \beta_i}\left|_{\beta_i=\beta_i^{k}}\right.\right)={\rm sign}\left(-\frac{\partial F_k}{\partial \beta_i}\left|_{\beta_i=\beta_i^{k-1}}\right.\right)={\rm sign}\left(\frac{F_k-F_{k-1}}{\omega_i}\right)=-{\rm sign}\left(\omega_i\right)
\end{align}
According to the analyses, we can obtain the sign of $\left(F_k-F_{k-1}\right)$:
\begin{align}
{\rm sign}\left(F_k-F_{k-1}\right)=-{\rm sign}\left(\omega\right)\cdot{\rm sign}\left(\omega\right)=-1.
\end{align}

According to the above equation, the objective function $F$ is monotonic. In addition, $F$ is bounded. Therefore, we can safely draw the conclusion that LARS converges in optimizing the problem defined in Eq.\ref{equ:theo1}.
\end{proof}

\section{Experiments}

In this section, we evaluate the performance of MEN by comparing against six representative dimensionality reduction algorithms, i.e., principal component analysis (PCA), Fisher's linear discriminant analysis (FLDA), discriminative locality alignment (DLA) (Zhang et al. \cite{DLAeccv}; Zhang et al. \cite{DLAtkde}), supervised locality preserving projection (SLPP), neighborhood preserving embedding (NPE), and sparse principal somponent analysis (SPCA), on three standard face image databases, i.e., UMIST (Graham and Allinson \cite{UMIST}), FERET (Phillips et al. \cite{FERET}) and YALE (Belhumeur et al. \cite{fisherface}).

PCA is an unsupervised linear dimensionality reduction algorithm which projects the data along the direction of maximal variance. FLDA is a supervised linear dimensionality reduction method. SLPP is a supervised modification of the locality preserving projections, which is a linearization of the Laplacian Eigenmap. NPE is a linear approximation to the locally linear embedding (LLE). SPCA is a sparse dimensionality reduction algorithm which combines the lasso penalty with PCA to produce sparse loadings.

\begin{figure}[t]
\begin{center}
 \includegraphics[width=1\linewidth]{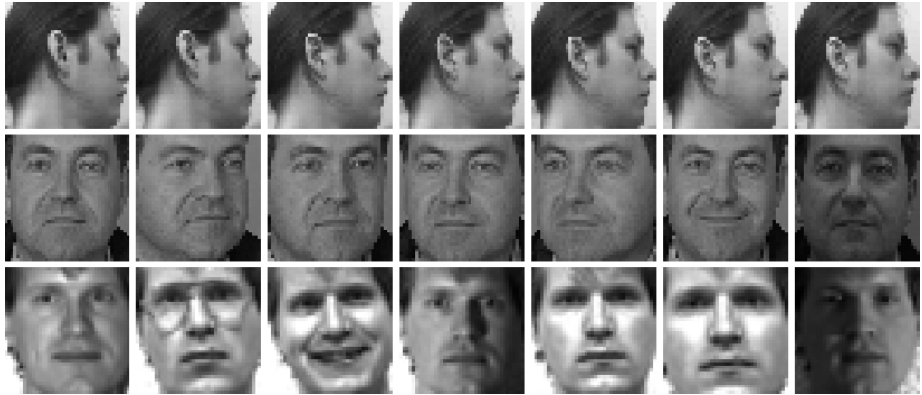}
\end{center}
   \caption{Sample face images from the three databases. The first row comes from UMIST; the second row comes from FERET; and the third row comes from YALE.}
\label{fig:sampleface}
\end{figure}

Three standard face image datasets, e.g., UMIST, FERET and YALE, are utilized in this paper to evaluate the proposed MEN for discriminative dimensionality reduction. There are $565$ face images from $20$ individuals in the UMIST dataset. The samples demonstrate variations in race, gender, pose and appearance. The FERET dataset consists of $13,539$ face images from $1,565$ individuals. The images vary in size, gender, pose, illumination, facial expression and age. We randomly select $100$ individuals, each of which has $7$ images from FERET for performance evaluation. The YALE dataset contains $165$ face images of $15$ individuals. Lighting conditions, gender, facial expressions and configurations are different among these images. All images from these three databases are normalized to $40\times 40$ pixel arrays with $256$ gray levels per pixel. Fig.\ref{fig:sampleface} shows sample images from these three datasets. Each image is reshaped to a long vector by concatenating its pixel values in a particular order.

Different algorithms follow an equivalent procedure for all face recognition experiments on various datasets. Firstly, the database is randomly divided into two separate sets: training set and testing set. Then the training set is used to learn the low dimensional subspace and corresponding projection matrix through given algorithm. After this, samples in the testing set are projected to a low dimensional subspace via the projection matrix. Finally, the nearest neighbor classifier is used to recognize testing samples in the subspace.

We apply PCA to reduce dimensions of original high dimensional face images before FLDA, DLA, LPP (with supervised setting) and NPE (with supervised setting). For FLDA, we retain $n-c$ dimensions in the PCA projection, where $n$ is the number of samples and $c$ is the number of classes. We project samples to the PCA subspace with $n-1$ dimensions for DLA, SLPP and NPE.

For UMIST and YALE, we randomly select $p=\left(5,7\right)$ images per individual for training, while the remaining images are used as testing samples. For FERET, $p=\left(4,5\right)$ images per individual are selected as training set, and the remaining for testing. All experiments are repeated five times, and the average recognition rates are calculated.

\begin{figure}[t]
\begin{center}
 \includegraphics[width=1\linewidth]{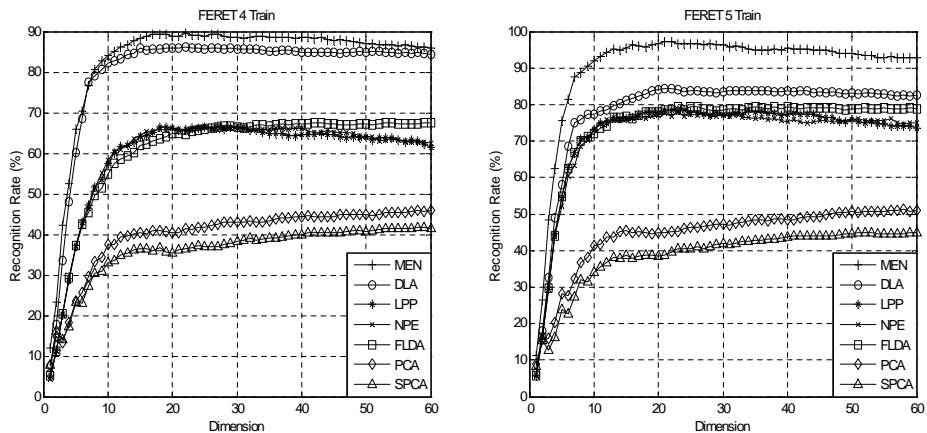}
\end{center}
   \caption{Recognition Rate vs. Dimension on FERET}
\label{fig:rateFERET}
\end{figure}

The results of these dimensionality reduction algorithms on two settings of FERET are shown in Fig.\ref{fig:rateFERET}. These seven algorithms can be divided into 3 groups according to their performance: PCA and SPCA are at the bottom level, because they are unsupervised and the label information is not considered. PCA is slightly better than SPCA, because SPCA is designed to approximate PCA but with less information retained to hold the sparse property. LPP, NPE and LDA are at the middle level. They are much better than PCA and SPCA because they consider the class label information. LPP and NPE preserve the local geometry based on the neighborhood information of samples, while LDA ignores the local geometry. LPP and NPE cannot perform as well as DLA and MEN because both of them ignore the margin maximization or the inter-class information. MEN and DLA are at the top level. MEN outperforms DLA because it reduces the noises by using the elastic net penalty.

\begin{figure}[t]
\begin{center}
 \includegraphics[width=1\linewidth]{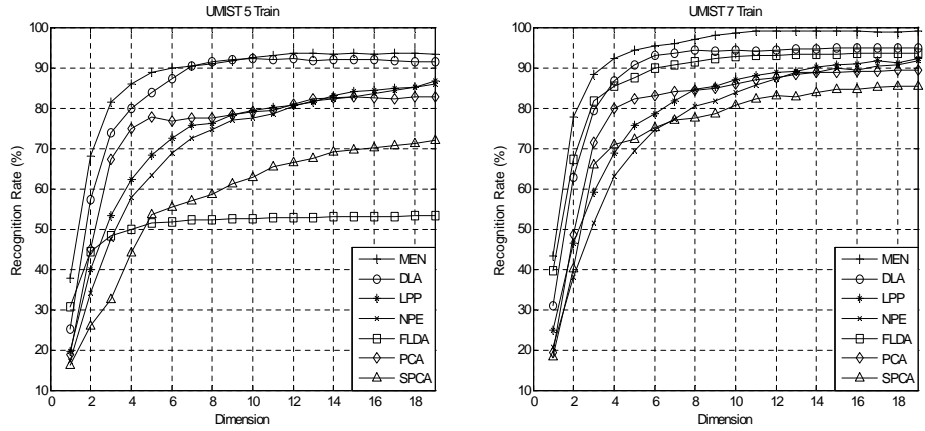}
\end{center}
   \caption{Recognition Rate vs. Dimension on UMIST}
\label{fig:rateUMIST}
\end{figure}

Experimental results on UMIST are shown in Fig.\ref{fig:rateUMIST}. MEN outperforms the other six algorithms consistently. Note the fact that MEN keeps having the highest recognition rate when the dimension of the selected subspace is low. This verifies the robustness of MEN in low dimension situation. In addition, the computational cost is proportional to the dimension of the selected subspace. Therefore MEN produces better results with less computational cost than other dimensionality reduction methods.

\begin{figure}[t]
\begin{center}
 \includegraphics[width=1\linewidth]{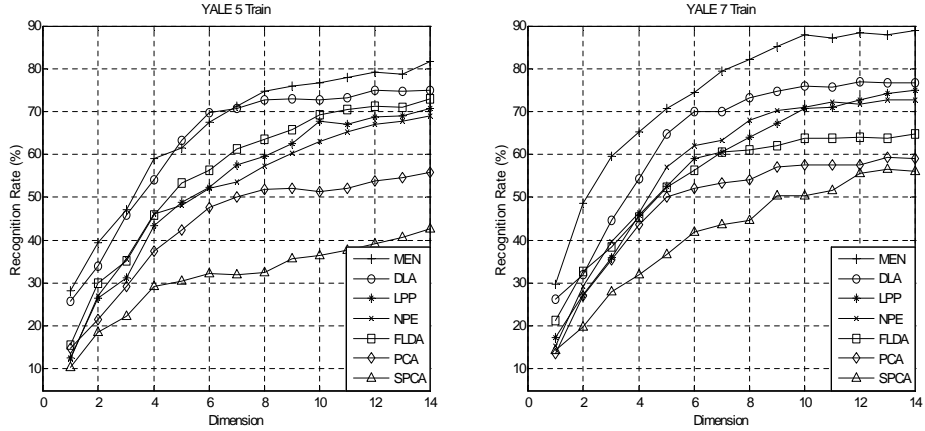}
\end{center}
   \caption{Recognition Rate vs. Dimension on YALE}
\label{fig:rateYALE}
\end{figure}

Fig.\ref{fig:rateYALE} shows MEN outperforms the other six algorithms on the YALE dataset. The curves of MEN are smoother than those of the other algorithms. This implicates that MEN is more stable than the other algorithms. MEN has high recognition rate even when the training set is small and the dimensions of the selected subspace is low. The priority of MEN can be attributes to its supervised learning property, consideration of data manifold structure, feature selection ability brought by sparsity and the grouping effect. The sparsity of MEN filters out classification irrelevant features, which bring unnecessary noises for classification. This is effective especially when the number of classes is much smaller than the number of the original features. Furthermore, the sparse projection matrix brings better interpretation and lower computational cost for subsequent calculation than dense projection matrices.

Table \ref{table:recognitionrate} lists the best recognition rate and the corresponding subspace dimension for each algorithm in the experiments on the three face image datasets. Sparse dimensionality reduction algorithm including MEN and SPCA always arrive their best recognition rate in lower dimensional subspace than other five algorithms. This is because the sparsity brought by the lasso penalty is able to select the most significant features. However, because SPCA does not consider the class label information, it always performs more poorly than other supervised algorithms. For each algorithm, the dimension of the best recognition rate is decreasing with the increasing of training samples. This is because more training samples make the low dimensional representation more stable and reliable.

\begin{table}
\begin{center}
\begin{tabular}{|c|c|c|c|c|c|c|c|c|}
\hline
\multicolumn{2}{|c|}{} & MEN & DLA & LPP & NPE & LDA & PCA & SPCA \\
\hline
\multirow{2}{*}{FERET}  & 4 & 90.67(17) & 88.67(19) & 74.00(17) & 74.33(21) & 76.33(25) & 48.00(54) & 45.67(41)\\ \cline{2-9}
                        & 5 & 96.50(30) & 88.50(35) & 83.50(36) & 82.00(19) & 84.00(49) & 54.00(51) & 48.50(58)\\
\hline \hline
\multicolumn{2}{|c|}{} & MEN & DLA & LPP & NPE & LDA & PCA & SPCA \\
\hline
\multirow{2}{*}{UMIST}  & 5 & 95.89(17) & 94.57(18) & 90.11(19) & 89.68(19) & 88.21(18) & 88.63(13) & 80.63(19)\\
\cline{2-9}
                        & 7 & 99.21(16) & 97.62(19) & 95.40(19) & 95.17(18) & 97.24(14) & 93.79(19) & 90.57(18)\\
\hline \hline
\multicolumn{2}{|c|}{} & MEN & DLA & LPP & NPE & LDA & PCA & SPCA \\
\hline
\multirow{2}{*}{YALE}   & 5 & 82.78(13) & 79.11(12) & 79.33(13) & 77.11(14) & 82.22(12) & 61.11(12) & 63.33(13)\\
\cline{2-9}
                        & 7 & 90.33(12) & 87.00(12) & 85.00(13) & 84.33(11) & 81.67(11) & 66.67(13) & 63.33(12)\\
\hline
\end{tabular}
\end{center}
\caption{Best recognition rate (\%) on three databases. For MEN, DLA, LPP (SLPP), NPE, LDA (FLDA), PCA, SPCA (Sparse PCA), the numbers in the parentheses behind the recognition rates are the subspace dimensions. Numbers in the second column denote the number of training samples per individual.}
\label{table:recognitionrate}
\end{table}

Boxplots of the experimental results of these seven dimensionality reduction algorithms on the three face image datasets are shown in Fig.\ref{fig:boxFERET}, Fig.\ref{fig:boxUMIST} and Fig.\ref{fig:boxYALE}, respectively. Each boxplot produces a box and whisker plot for each method. The box has lines at the lower quartile, median, and upper quartile values. Whiskers extend from each end of the box to the adjacent values in the data-by default and the most extreme values within $1.5$ times the interquartile range from the ends of the box.

MEN achieves the most robust recognition rate, because it considers the sparse property, the local geometry of intra-class samples, and the margin maximization and classification error minimization of inter-class samples. MEN selects features with the largest correlation and eliminates the most unstable ones. Manifold learning methods, such as LPP, DLA and NPE, as well as LDA are more stable than PCA and SPCA according to these boxplots because they consider the class label information.

\begin{figure}[htdp]
\begin{center}
 \includegraphics[width=1\linewidth]{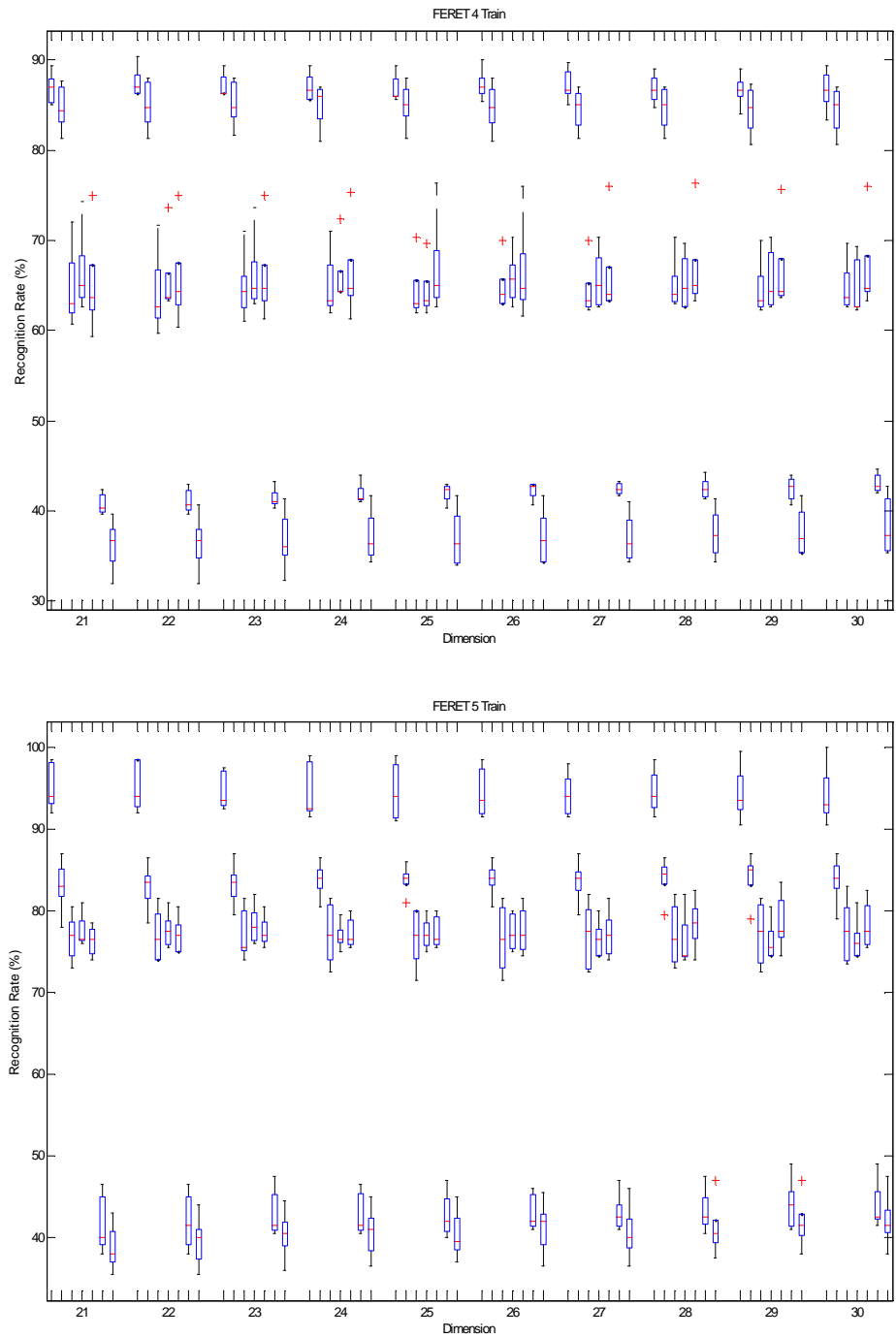}
\end{center}
   \caption{Boxplot of Recognition Rate vs. Dimension (from $21$ to $30$) on FERET with $4$ ($5$) training samples per person. For every dimension, from left to right, the seven boxes refer to MEN, DLA, LPP, NPE, FLDA, PCA, and SPCA.}
\label{fig:boxFERET}
\end{figure}

\begin{figure}[htdp]
\begin{center}
 \includegraphics[width=1\linewidth]{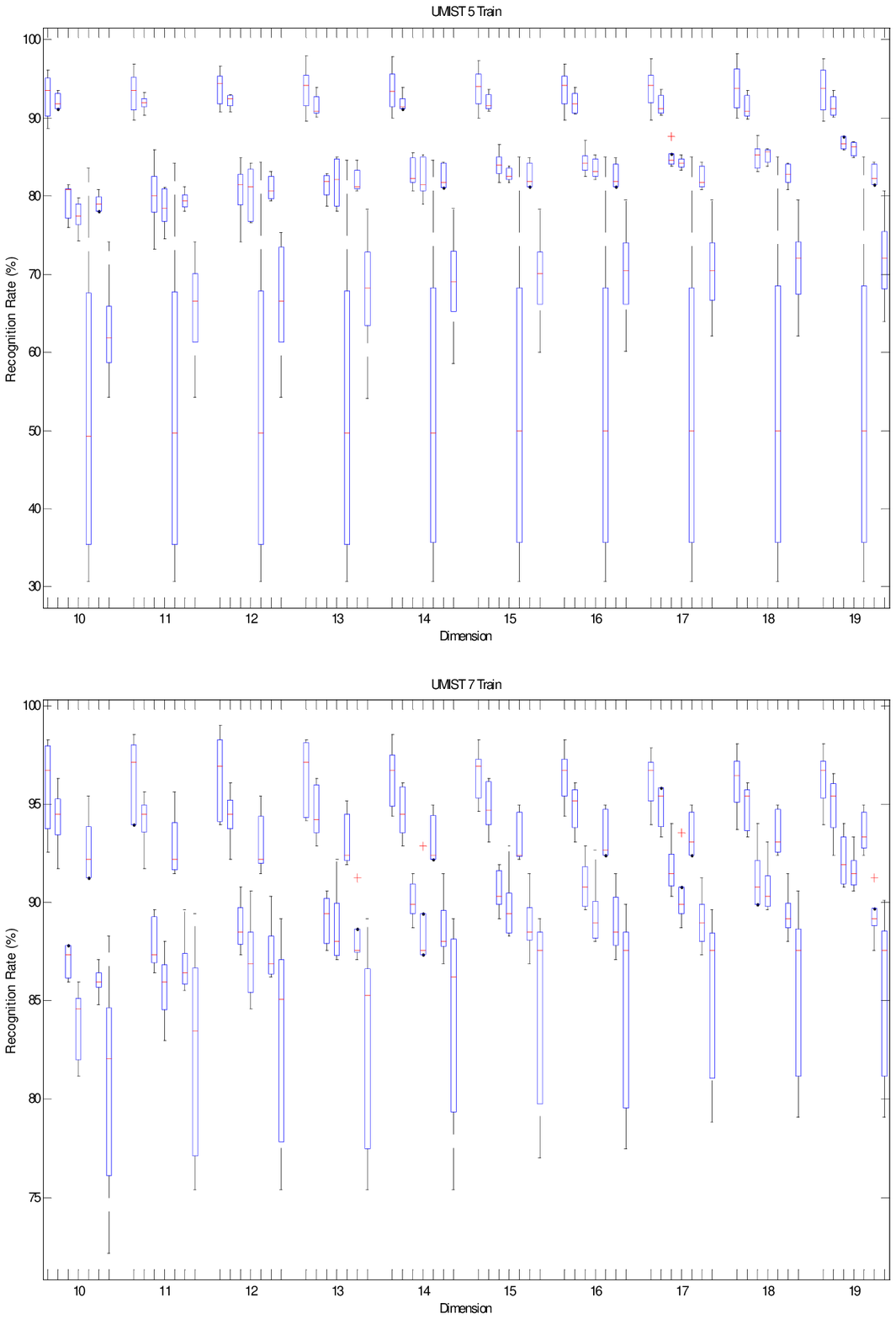}
\end{center}
   \caption{. Boxplot of Recognition Rate vs. Dimension (from $10$ to $19$) on UMIST with $5$ ($7$) training samples per person. For every dimension, from left to right, the seven boxes refer to MEN, DLA, LPP, NPE, FLDA, PCA, and SPCA.}
\label{fig:boxUMIST}
\end{figure}

\begin{figure}[htdp]
\begin{center}
 \includegraphics[width=1\linewidth]{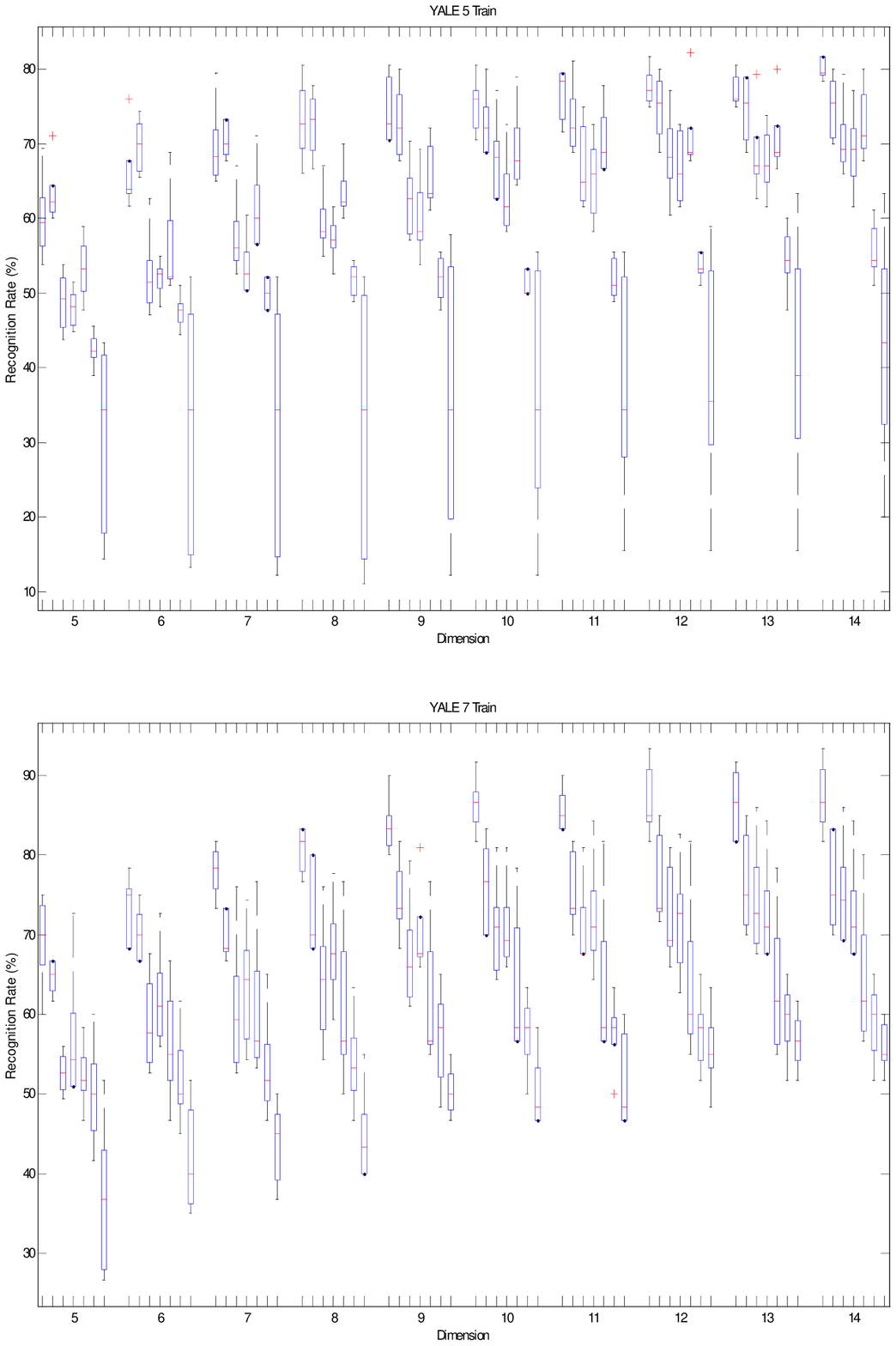}
\end{center}
   \caption{Boxplot of Recognition Rate vs. Dimension (from $5$ to $14$) on YALE with $5$ ($7$) training samples per person. For every dimension, from left to right, the seven boxes refer to MEN, DLA, LPP, NPE, FLDA, PCA, and SPCA.}
\label{fig:boxYALE}
\end{figure}

Fig.\ref{fig:faceFERET}, Fig.\ref{fig:faceUMIST} and Fig.\ref{fig:faceYALE} show the columns of the projection matrix   of the seven algorithms on the three face image datasets. The low dimensional subspace is spanned by the column vectors, which is called bases. The bases of PCA are called Eigenfaces (Turk and Pentland \cite{Eigenface}), while the bases of LDA are called Fisherfaces (He et al. \cite{LaplacianFace}) in previous literatures. Similar methods can be applied to DLA, SLPP, NPE, SPCA and MEN. The bases of MEN are sparser and have less noise than PCA and DLA because of its sparsity, and more grouping than SPCA because of its grouping effect adopted from the $\ell_2$ penalty. Sparse bases lead to computational efficiency and good interpretation. According to Fig.\ref{fig:faceFERET}, Fig.\ref{fig:faceUMIST} and Fig.\ref{fig:faceYALE}, ``MEN faces'' retain the most discriminative facial features, e.g., eyebrows, eyes, nose, mouth, ears and facial contours, while leave the other parts blank. ``SPCA faces'' are sparse but without the grouping effect, their facial contours and organs are represented by some isolate points. ``LPP faces'' and ``NPE faces'' are very similar in appearances and this fact well explains that they perform comparably in these datasets. ``DLA faces'' have better description of features and less noises than those obtained by LPP, NPE and FLDA.

\begin{figure}[htdp]
\begin{center}
 \includegraphics[width=1\linewidth]{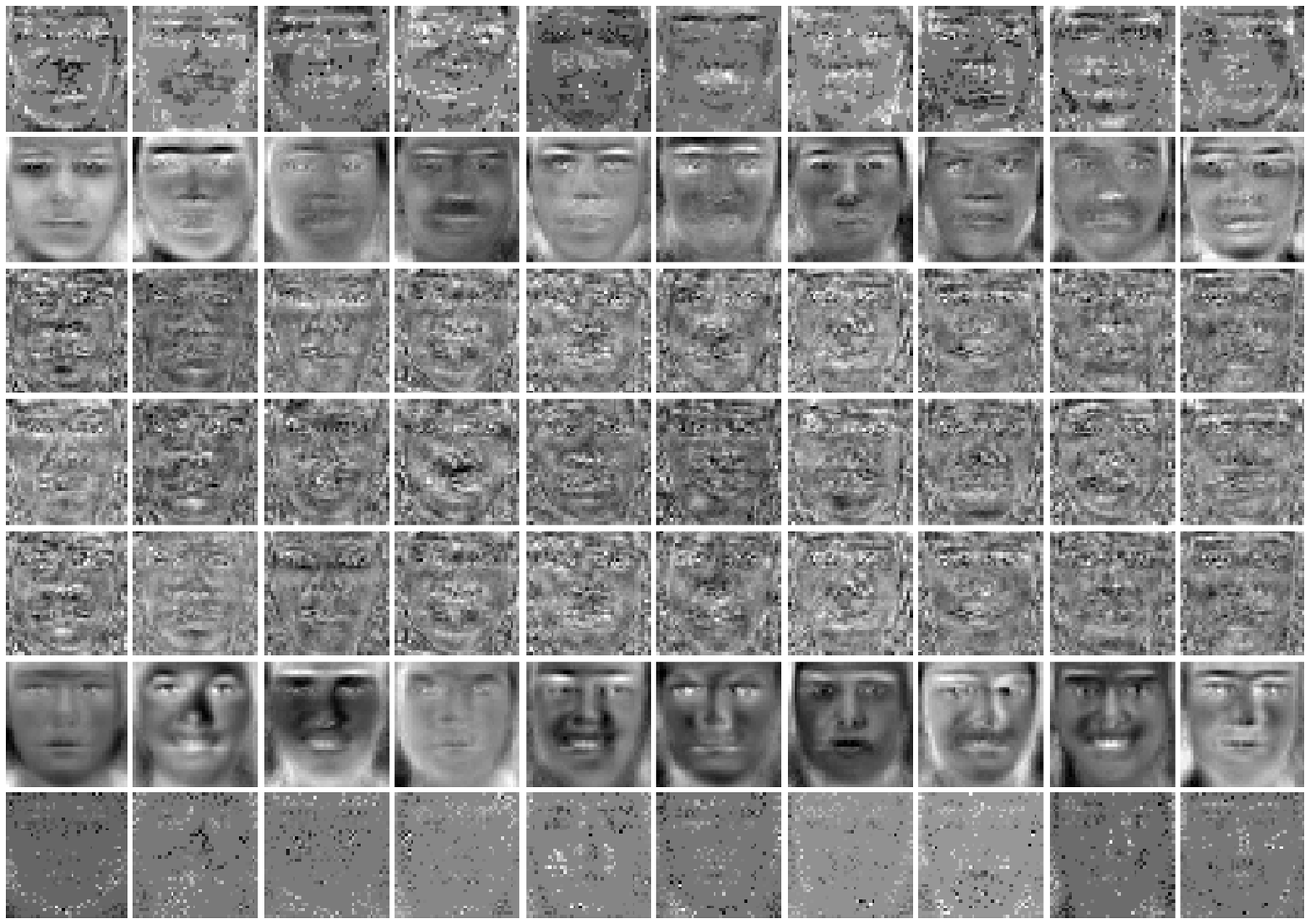}
\end{center}
   \caption{Plots of first $10$ bases obtained from $7$ dimensionality reduction algorithms on FERET
For each column, from top to bottom: MEN, DLA, LPP, NPE, FLDA, PCA, and SPCA}
\label{fig:faceFERET}
\end{figure}

\begin{figure}[htdp]
\begin{center}
 \includegraphics[width=1\linewidth]{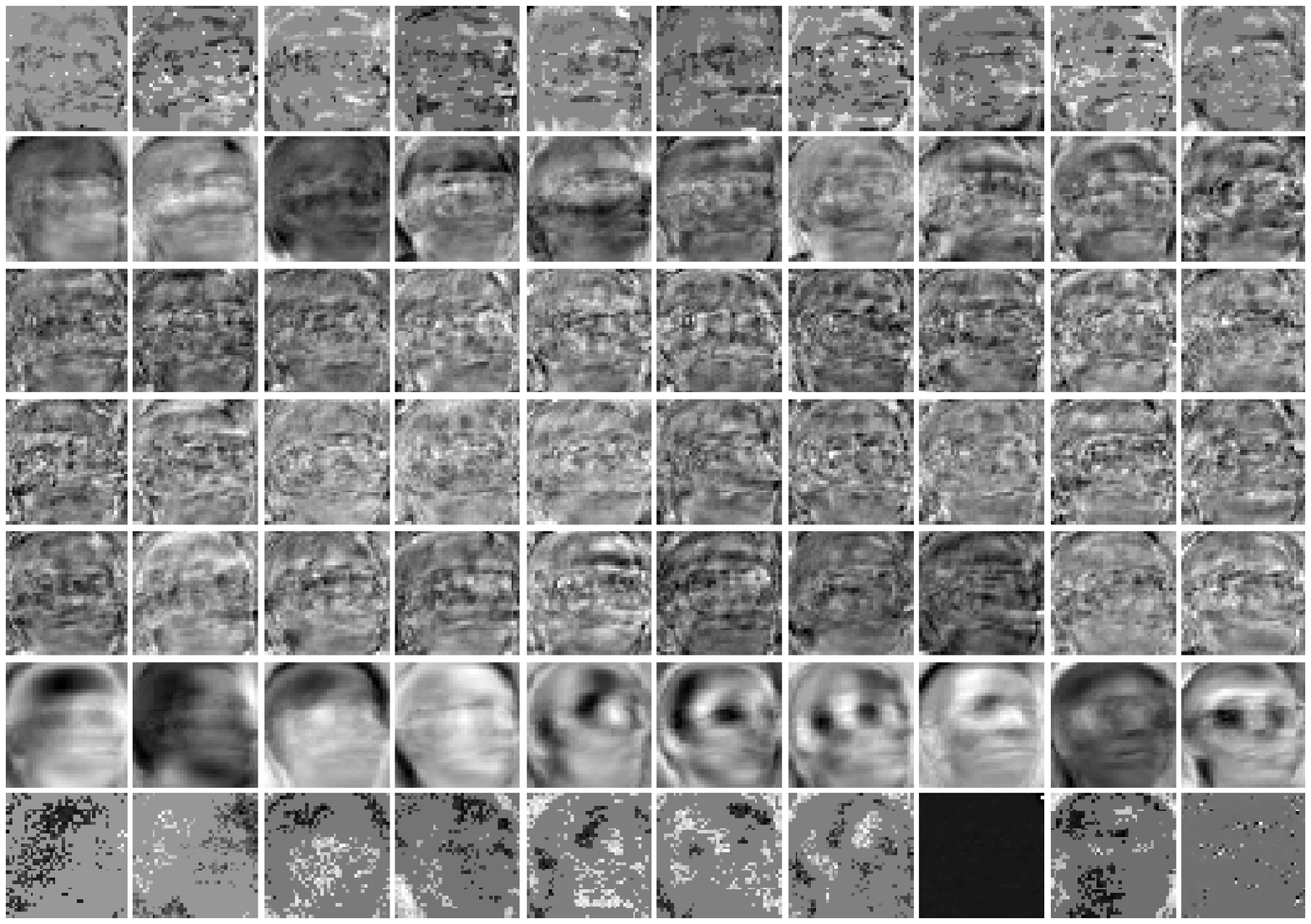}
\end{center}
   \caption{Plots of first $10$ bases obtained from $7$ dimensionality reduction algorithms on UMIST
For each column, from top to bottom: MEN, DLA, LPP, NPE, FLDA, PCA, and SPCA}
\label{fig:faceUMIST}
\end{figure}

\begin{figure}[htdp]
\begin{center}
 \includegraphics[width=1\linewidth]{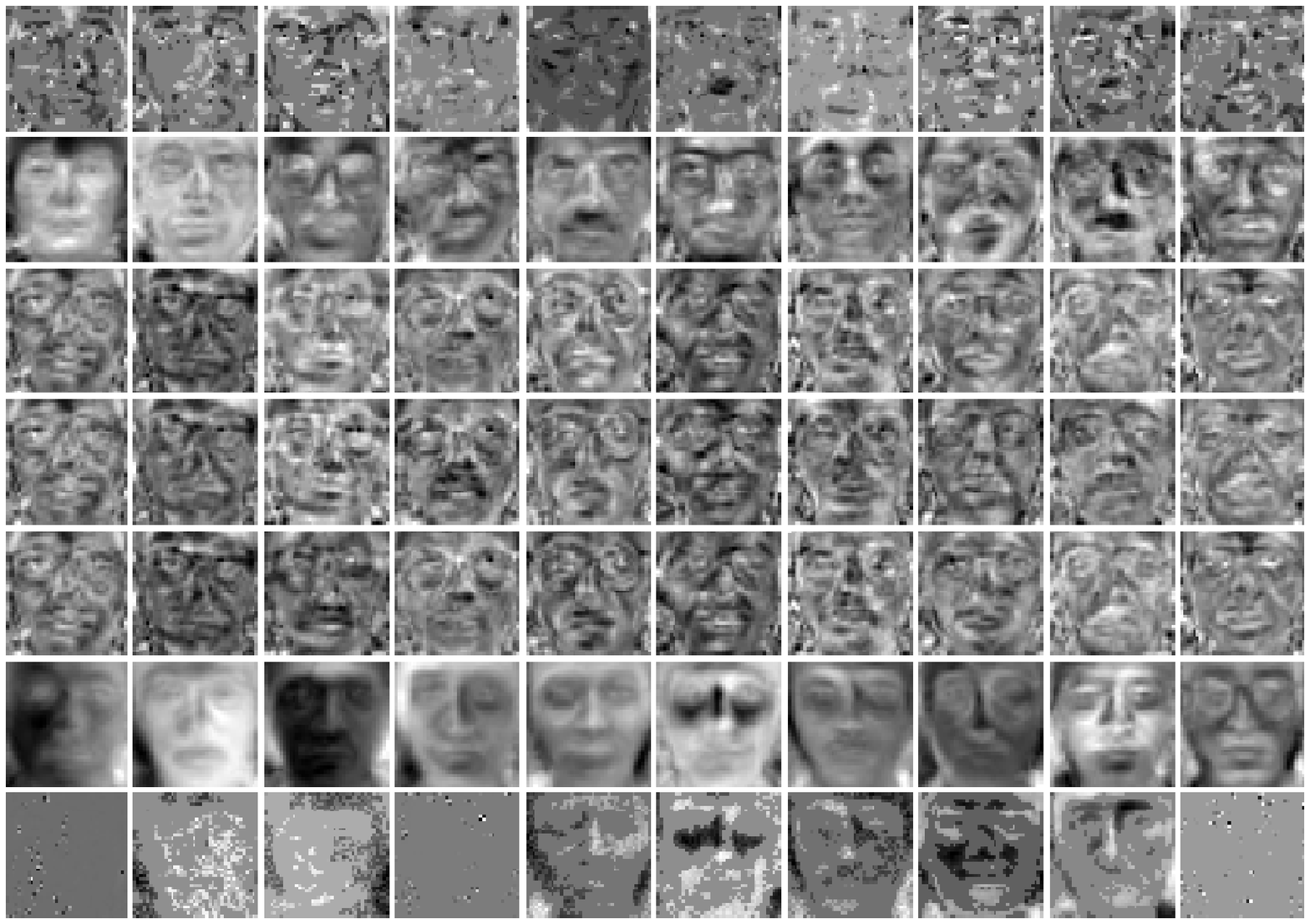}
\end{center}
   \caption{Plots of first $10$ bases obtained from $7$ dimensionality reduction algorithms on YALE
For each column, from top to bottom: MEN, DLA, LPP, NPE, FLDA, PCA, and SPCA}
\label{fig:faceYALE}
\end{figure}

In each LARS loop of the MEN algorithm, according to the algorithm listed in Algorithm 1, all entries of one column in the MEN projection matrix are zeros initially. They are sequentially added into the active set according to their importance. The values of active ones are increased with equal altering correlation. In this process, the $\ell_1$-norm of the column vector is augmented. Fig.\ref{fig:tree} shows the altering tracks of some entries of the column vector in one LARS loop. We called these tracks ``coefficient path'' in LARS. In Fig.\ref{fig:tree}, every coefficient path starts from zero when the corresponding variable becomes active, and changes its direction when another variable is added into the active set. All the paths keep in the directions which make the correlations of their corresponding variables equally altering. The $\ell_1$-norm is increasing along the greedy augment of entries. The coefficient paths proceed along the gradient decent direction of objective function on the subspace, which is spanned by the active variables.

\begin{figure}[htdp]
\begin{center}
 \includegraphics[width=1\linewidth]{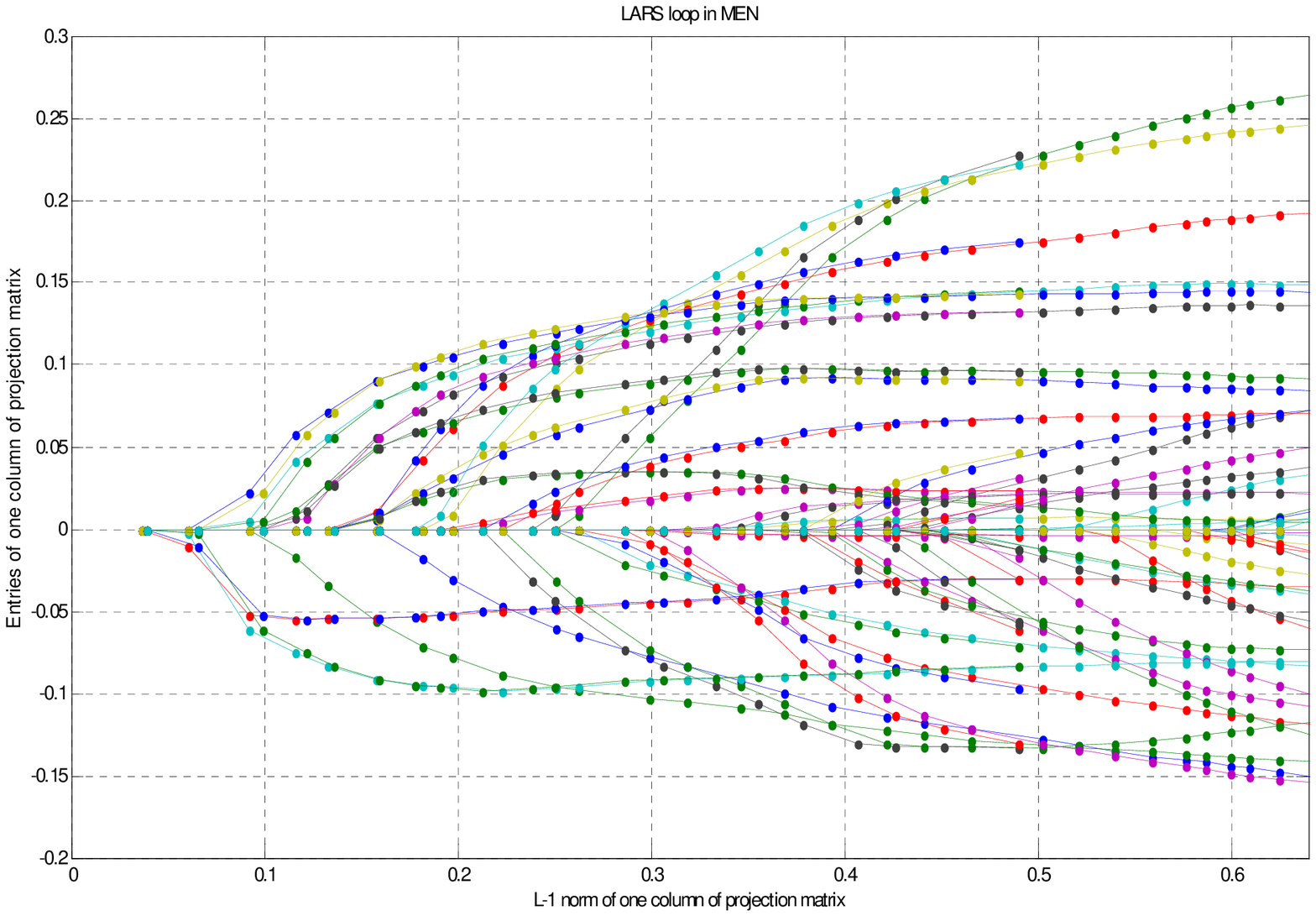}
\end{center}
   \caption{Entries of one column of projection matrix vs. its $\ell_1$-norm in one LARS loop of MEN}
\label{fig:tree}
\end{figure}

Fig.\ref{fig:simulation} shows $10$ of the $1600$ coefficient paths from LAPS loop for the first base in experiment on FERET dataset. MEN selects ten important variables (facial features) sequentially here. Each feature, its corresponding coefficient path and the``MEN fac'' when the feature is added into active set are assigned the same color which is different with the other $9$ features. In each ``MEN face'', the new added active feature is marked by a small circle, and all the active features are marked by white crosses. The features selected by MEN can produce explicit interpretation of the relationship between facial features and face recognition: feature 1 is the left ear, feature 2 is the top of nose, feature 3 is on the head contour, feature 4 is the mouth, feature 5 and feature 6 are on the left eye, feature 7 is the right ear, and feature 8 is the left corner of mouth. These features are already verified of great importance in face recognition by many other famous face recognition methods. Moreover, Fig.\ref{fig:simulation} also shows MEN can group correlated features, for example, feature 5 and feature 6 are selected sequentially because they are both on the left eye. In addition, features which are not very important, such as feature 9 and feature 10 in Fig.\ref{fig:simulation}, are selected after the selection of the other more significant features and assigned smaller value than those more important ones. Therefore, MEN is a powerful algorithm in variable (feature) selection.

\begin{figure}[htdp]
\begin{center}
 \includegraphics[width=1\linewidth]{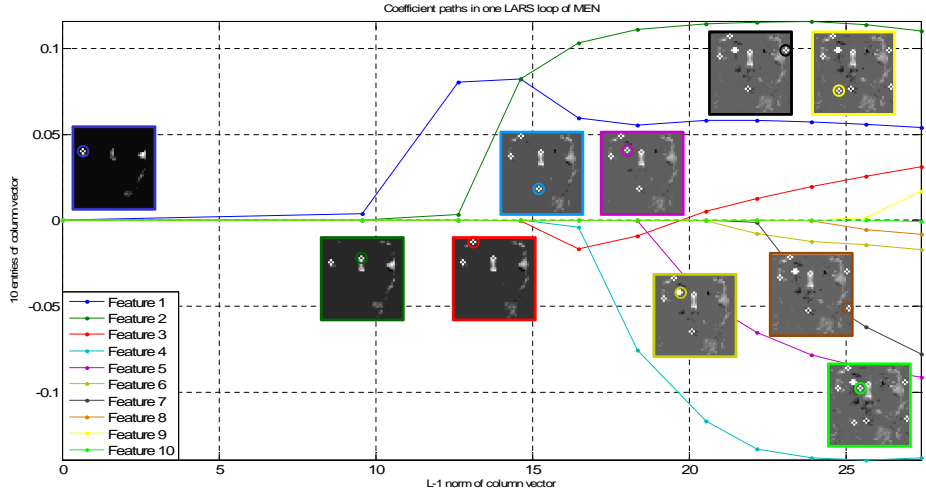}
\end{center}
   \caption{Coefficient paths of $10$ entries (features) in one column vector}
\label{fig:simulation}
\end{figure}


\section{Conclusion}

In this paper, we propose a unifying framework which obtains a sparse projection matrix for subsequent classification, termed manifold elastic net or MEN for short. MEN incorporates the advantages of both manifold learning based dimensionality reduction and sparse learning based dimensionality reduction, but it is not a direct combination of these two. To obtain a sparse projection matrix, MEN imposes the elastic net penalty over a loss function that is defined under the patch alignment framework. The objective function of MEN can be transformed into a lasso penalized least square problem by using a series of complex linear algebra equivalent transformations, and thus the least angle regression (LARS) can be applied to obtain the optimal sparse projection matrix.

In MEN, the patch alignment framework is first used to construct local patches of data and unifies these patches into a global coordinate system. Secondly, the classification error is minimized directly via weighted principal component analysis (PCA) over class centers. Thirdly, to obtain a sparse projection matrix with the grouping effect, the elastic net penalty is added to the objective function. After a series of equivalent transformations, MEN can be rewritten as a lasso-type regression. Therefore, LARS can be applied to solve the problem efficiently. In each LARS loop for MEN optimization, important variables are added into the active set sequentially according to their correlation. All the elements in the active set are altered along a special direction with a special distance in each step. The special direction and distance keep the correlation of active elements identical and the largest in a LARS loop. The procedure is conducted several times to obtain a set of sparse bases because these bases are independent.

MEN enjoys advantages in several aspects: 1) the local geometry of intra-class samples is well preserved for low dimensional data representation, 2) both the margin maximization and the classification error minimization are considered for discriminative information preservation, 3) the sparsity of the projection matrix of MEN improves the parsimony in computation, 4) the elastic net penalty reduces the over-fitting problem, and 5) the projection matrix of MEN can be interpreted psychologically and physiologically.

Experimental results of face recognition on UMIST, FERET and YALE show that MEN performs better and more stable than popular dimensionality reduction algorithms, such as the principal component analysis (PCA), Fisher's linear discriminant analysis (FLDA), the discriminative locality alignment (DLA), the locality preserving projections with supervised setting (LPP), the neighborhood preserving embedding with supervised setting (NPE), and the sparse principal component analysis (SPCA).

There are still many interesting properties of MEN which have not been targeted and formally proved in this paper. In the future, we will analyze its error bounds under different situations. Another important problem in MEN is how to choose the optimal sparsity, so that we can remove most noise and retain most discriminative information for subsequent classification. The compressed sensing may be an effective tool to address the above concern. It is also valuable to replace the lasso penalty with the $\ell_0$-norm penalty to further improve MEN with more ``accurate sparsity''. The lasso penalty is a relaxation of $\ell_0$-norm penalty, and there are alternatives which could perform better than the lasso penalty, e.g., the smoothly clipped absolute deviation penalty (SCAD) (Fan and Li \cite{SCAD}), the reweighted $\ell_1$ minimization (Candes et al. \cite{ReweightedL1}), the adaptive lasso (Zou \cite{AdaptiveLasso}) and the adaptive elastic net (Zou and Zhang \cite{AdaptiveElasticNet}). The advantages of these methods can be adopted in MEN to further enhance the variable selection ability of MEN, and there is still a long way to go.



\section*{Acknowledgement}

This work was supported by NTU NAP Grant with project number M58020010 and the Open Project Program of the State Key Lab of CAD and CG (Grant No. A1006), Zhejiang University.



\end{document}